\newtheorem{theorem}{Theorem}[section]
\newtheorem{lemma}[theorem]{Lemma}
\newtheorem{assumption}[theorem]{Assumption}
\newtheorem{remark}[theorem]{Remark}
\newcommand{\reals}{\mathbb{R}}
\newcommand{\E}{\mathbb{E}}
\newcommand{\ba}{\mathbf{a}}
\newcommand{\be}{\mathbf{e}}
\newcommand{\bx}{\mathbf{x}}
\newcommand{\bw}{\mathbf{w}}
\newcommand{\bg}{\mathbf{g}}
\newcommand{\bb}{\mathbf{b}}
\newcommand{\bu}{\mathbf{u}}
\newcommand{\bv}{\mathbf{v}}
\newcommand{\by}{\mathbf{y}}
\newcommand{\Ocal}{\mathcal{O}}
\newcommand{\Dcal}{\mathcal{D}}
\newcommand{\norm}[1]{\|#1\|}
\newcommand{\inner}[1]{\langle#1\rangle}
\newcommand{\secref}[1]{Sec.~\ref{#1}}
\newcommand{\subsecref}[1]{Subsection~\ref{#1}}
\renewcommand{\eqref}[1]{Eq.~(\ref{#1})}
\newcommand{\lemref}[1]{Lemma~\ref{#1}}
\newcommand{\thmref}[1]{Thm.~\ref{#1}}
\newcommand{\appref}[1]{Appendix~\ref{#1}}
\title{Learning a Single Neuron with Gradient Methods}
\author{Gilad Yehudai\qquad Ohad Shamir\\
Weizmann Institute of Science\\
\texttt{\{gilad.yehudai,ohad.shamir\}@weizmann.ac.il}
}
\date{}
\begin{document}

\maketitle

\begin{abstract}
    We consider the fundamental problem of learning a single neuron $\bx\mapsto \sigma(\bw^\top\bx)$ in a realizable setting, using standard gradient methods with random initialization, and under general families of input distributions and activations. On the one hand, we show that some assumptions on both the distribution and the activation function are necessary. On the other hand, we prove positive guarantees under mild assumptions, which go significantly beyond those studied in the literature so far. We also point out and study the challenges in further strengthening and generalizing our results. 
\end{abstract}

\section{Introduction}

In recent years, much effort has been devoted to understanding why neural networks are successfully trained with simple, gradient-based methods, despite the inherent non-convexity of the learning problem. However, our understanding of this is still partial at best.

In this paper, we focus on the simplest possible nonlinear neural network, composed of a single neuron, of the form $\bx\mapsto \sigma(\bw^\top\bx)$, where $\bw$ is the parameter vector and $\sigma:\reals\rightarrow\reals$ is some fixed non-linear activation function. Moreover, we consider a realizable setting, where the inputs are sampled from some distribution $\Dcal$, the target values are generated by some unknown target neuron $\bx\mapsto \sigma(\bv^\top\bx)$ (possibly corrupted by independent zero-mean noise, and where we generally assume $\norm{\bv}=1$ for simplicity), and we wish to train our neuron with respect to the squared loss. Mathematically, this boils down to minimizing the following objective function:
\begin{equation}\label{eq:single neuron}
F(\bw) := 
\E_{\bx\sim\Dcal}\left[\frac{1}{2}\left(\sigma(\bw^\top\bx)-\sigma(\bv^\top\bx)\right)^2\right].
\end{equation}
For this problem, we are interested in the performance of gradient-based methods, which are the workhorse of modern machine learning systems. These methods initialize $\bw$ randomly, and proceed by taking (generally stochastic) gradient steps w.r.t. $F$. If we hope to explain the success of such methods on complicated neural networks, it seems reasonable to expect a satisfying explanation for their convergence on single neurons.

Although the learning of single neurons was studied in a number of papers (see the related work section below for more details), the existing analyses all suffer from one or several limitations: Either they apply for a specific distribution $\Dcal$, which is convenient to analyze but not very practical (such as a standard Gaussian distribution); Apply to gradient methods only with a specific initialization (rather than a standard random one); Require technical conditions on the input distribution which are not generally easy to verify; Or require smoothness and strict monotonicity  conditions on the activation function $\sigma(\cdot)$ (which excludes, for example, the common ReLU function $\sigma(z)=\max\{0,z\}$). However, a bit of experimentation strongly suggests that none of these restrictions is really necessary for standard gradient methods to succeed on this simple problem. Thus, our understanding of this problem is probably still incomplete. 

The goal of this paper is to study to what extent the limitations above can be removed, with the following contributions:
\begin{itemize}
\item We begin by asking whether positive results are possible without \emph{any} explicit assumptions on the distribution $\Dcal$ or the activation $\sigma(\cdot)$ (other than, say, bounded support for the former and Lipschitz continuity for the latter). Although this seems reasonable at first glance, we show in \secref{sec:assumptions} that unfortunately, this is not the case: Even for the ReLU activation function, there are bounded distributions $\Dcal$ on which gradient descent will fail to optimize \eqref{eq:single neuron} with probability exponentially close to $1$. Moreover, even for $\Dcal$ which is a standard Gaussian, there are Lipschitz activation functions on which gradient methods will likely fail. 
\item Motivated by the above, we ask whether it is possible to prove positive results with \emph{mild and transparent} assumptions on the distribution and activation function, which does not exclude common setups. In \secref{sec:mild assumptions gradient}, we prove a key technical result, which implies that if the distribution $\Dcal$ is sufficiently ``spread'' and the activation function satisfies a weak monotonicity condition (satisfied by ReLU and all standard activation functions), then $\inner{\nabla F(\bw),\bw-\bv}$ is positive in most of the domain. This implies that an exact gradient step with sufficiently small step size will bring us closer to $\bv$ in ``most'' places. Building on this result, we prove in \secref{sec:convergence with constant probability} a constant-probability convergence guarantee for several variants of gradient methods (gradient descent, stochastic gradient descent, and gradient flow) with random initialization.  
\item In \secref{sec:spherically symmetric}, we consider more specifically the case where $\Dcal$ is any spherically symmetric distribution (which includes the standard Gaussian as a special case) and the ReLU activation function. In this setting, we show that the convergence results can be made to hold with high probability, due to the fact that the angle between the parameter vector and the target vector $\bv$ motonically decreases. As we discuss later on, the case of the ReLU function and a standard Gaussian distribution was also considered in  \cite{soltanolkotabi2017learning,kalan2019fitting}, but that analysis crucially relied on initialization at the origin and a Gaussian distribution, whereas our results apply to more generic initialization schemes and distributions.
\item A natural question arising from these results is whether a high-probability result can be proved for non-spherically symmetric distributions. We study this empirically in Subsection \ref{subsec:failures of gaussian}, and show that perhaps surprisingly, the angle to the target function might  \emph{increase} rather than decrease, already when we consider unit-variance Gaussian distributions with a non-zero mean. This suggests that a fundamentally different approach would be required for a general high-probability guarantee. 
\end{itemize}

Overall, we hope our work contributes to a better understanding of the dynamics of gradient methods on simple neural networks, and suggests some natural avenues for future research.

\subsection{Related Work}

First, we emphasize that learning a single target neuron is \emph{not} an inherently difficult problem: Indeed, it can be efficiently performed with minimal assumptions, using the Isotron algorithm and its variants (\citet{kalai2009isotron,kakade2011efficient}). Also, other algorithms exist for even more complicated networks or more general settings, under certain assumptions (e.g., \citet{goel2016reliably,janzamin2015beating}). However, these are non-standard algorithms, whereas our focus here is on standard, vanilla gradient methods. 

For this setting, a positive result was provided in \citet{mei2016landscape}, showing that gradient descent on the empirical risk function $\frac{1}{n}\sum_{i=1}^{n}(\sigma(\bx_i^\top\bw)-\sigma(\bx_i^\top\bv))^2$ (with $\bx_i$ sampled i.i.d. from $\Dcal$ and $n$ sufficiently large) successfully yields a good approximation of $\bv$. However, the analysis requires $\sigma$ to be strictly monotonic, and to have uniformly bounded derivatives up to the third order. This excludes standard activation functions such as the ReLU, which are neither strictly monotonic nor differentiable. Indeed, assuming that the activation is strictly monotonic makes the analysis much easier, as we show later on in \thmref{thm:too strong activation assumption}. A related analysis under strict monotonicity conditions is provided in \citet{oymak2018overparameterized}. 

For the specific case of a ReLU activation function $\sigma(\cdot)=\max\{\cdot,0\}$ and a standard Gaussian input distribution,  \citet{tian2017analytical} proved that with constant probability, gradient flow over \eqref{eq:single neuron} will asymptotically converge to the global minimum. \citet{soltanolkotabi2017learning} and \citet{kalan2019fitting} considered a similar setting, and proved a non-asymptotic convergence guarantee for gradient descent or stochastic gradient descent on the empirical risk function $\frac{1}{n}\sum_{i=1}^{n}(\sigma(\bx_i^\top\bw)-\sigma(\bx_i^\top\bv))^2$. However, that analysis crucially relied on initialization at precisely $\mathbf{0}$, as well as a certain assumption on how the derivative of the ReLU function is computed at $0$. In more details, we impose the convention that even though the ReLU function is not differentiable at $0$, we take $\sigma'(0)$ to be some fixed positive number, and the gradient of the population objective $F$ at $\mathbf{0}$ to be
\[
\E_{\bx\sim \Dcal}\left[(\sigma(0)-\sigma(\bv^\top\bx))\sigma'(0)\bx\right]~=~
-\sigma'(0)\cdot\E_{\bx\sim \Dcal}\left[\sigma(\bv^\top\bx)\bx\right]~.
\]
Assuming $\sigma'(0)>0$, we get that the gradient is non-zero and proportional to $-\E_{\bx\sim\Dcal}[\sigma(\bv^\top\bx)\bx]$. For a Gaussian distribution (and more generally, spherically symmetric distributions), this turns out to be proportional to $-\bv$, so that an exact gradient step from $\mathbf{0}$ will lead us precisely in the direction of the target parameter vector $\bv$. As a result, if we calculate a sufficiently precise approximation of this direction from a random sample, we can get \emph{arbitrarily close to $\bv$ in a single iteration} (see  \citet[Remark 1]{kalan2019fitting} for a discussion of this). Unfortunately, this unique behavior is specific to initialization at $\mathbf{0}$ with a certain convention about $\sigma'(0)$ (note that even locally around $\mathbf{0}$, the gradient may not approximate $\bv$, since it is generally discontinuous around $\mathbf{0}$). Thus, although the analysis is important and insightful, it is difficult to apply more generally. 

\citet{du2017convolutional} considered conditions under which a single ReLU convolutional filter is learnable with gradient methods, a special case of which is a single ReLU neuron. The paper is closely related to our work, in the sense that they were also motivated by finding general conditions under which positive results are attainable. Moreover, some of the techniques they employed share similarities with ours (e.g., considering the gradient correlation as in \secref{sec:mild assumptions gradient}). However, our results differ in several aspects: First, they consider only the ReLU activation function, while we also consider general activations. Second, their results assume a technical condition on the eigenvalues of certain distribution-dependent matrices, with the convergence rate depending on these eigenvalues. However, the question of when might this condition hold (for general distributions) is left unclear. In contrast, our assumptions are more transparent and have a clear geometric intuition. Third, their results hold with constant probability, even for a standard Gaussian distribution, while we employ a different analysis to prove high probability guarantees for general spherically symmetric distributions. Finally, we also provide negative results, showing the necessity of assumptions on both the activation function and the input distribution, as well as suggesting which approaches might not work for further generalizing our results. 


A line of recent works established the effectiveness of gradient methods in solving non-convex optimization problems with a \emph{strict saddle} property, which implies that all near-stationary points with nearly positive definite Hessians are close to global minima (see \citet{jin2017escape,ge2015escaping,sun2015nonconvex}). A relevant example is phase retrieval, which actually fits our setting with $\sigma(\cdot)$ being the quadratic function $z\mapsto z^2$ (\citet{sun2018geometric}). However, these results can only be applied to smooth problems, where the objective function is twice differentiable with Lipschitz-continuous Hessians (excluding, for example, problems involving the ReLU activation function). An interesting recent exception is the work of \citet{tan2019online}, which considered the case $\sigma(z)=|z|$. However, their results are specific to that activation, and assumes a specific input distribution $\Dcal$ (uniform on a scaled origin-centered sphere). In contrast, our focus here is on more general families of distributions and activations. 

\citet{brutzkus2017globally} show that gradient descent learns a simple convolutional network with non-overlapping patches, when the inputs have a standard Gaussian distribution. Similar to the analysis in \secref{sec:spherically symmetric} in our paper, they rely on showing that the angle between the learned parameter vector and a target vector monotonically decreases with gradient methods. However, the network architecture studied is different than ours, and their proof heavily relies on the symmetry of the Gaussian distribution.

Less directly related to our setting, a popular line of recent works showed how gradient methods on highly over-parameterized neural networks can learn various target functions in polynomial time (e.g., \citet{allen2019learning,daniely2017sgd,arora2019fine,cao2019generalization}). However, as pointed out in \citet{yehudai2019power}, this type of analysis cannot be used to explain learnability of single neurons.

\section{Preliminaries}\label{sec:preliminaries}

\textbf{Notation.} We use bold-faced letters to denote vectors. For a vector $\bw$, we let $w_i$ denote its $i$-th coordinate. We denote $[z]_+ := \max\{0,z\}$ to be the ReLU function. For a vector $\bw$, we let $\bar{\bw} := \frac{\bw}{\norm{\bw}}$, and by $\pmb{1}$ we denote the all-ones vector $(1,\dots,1)$. Given vectors $\bw$, $\bv$ we let $\theta(\bw,\bv) := \arccos\left(\frac{\bw^\top\bv}{\norm{\bw}\norm{\bv}}\right)=\arccos(\bar{\bw}^\top\bar{\bv})\in [0,\pi]$ denote the angle between $\bw$ and $\bv$. We use $\mathcal{P}$ to denote probability. $\mathbbm{1}(\cdot)$ denotes the indicator function, for example $\mathbbm{1}(x>0)$ equals $1$ if $x>0$ and $0$ otherwise.

\textbf{Target Neuron.} Unless stated otherwise, we assume that the target vector $\bv$ in \eqref{eq:single neuron} is unit norm, $\norm{\bv}=1$.

\textbf{Gradients.} When $\sigma(\cdot)$ is differentiable, the gradient of the objective function in \eqref{eq:single neuron} is 
\begin{equation}\label{eq:gradient of a single neuron}
\nabla F(\bw) = 
\E_{\bx\sim\Dcal}\left[\left(\sigma(\bw^\top\bx)-\sigma(\bv^\top\bx)\right)
\cdot\sigma'(\bw^\top\bx)\bx\right]    
\end{equation}
When $\sigma(\cdot)$ is not differentiable,
we will still assume that it is differentiable almost everywhere (up to a finite number of points), and that in every point of non-differentiability $z$, there are well-defined left and right derivatives. In that case, practical implementations of gradient methods fix $\sigma'(z)$ to be some number between its left and right derivatives (for example, for the ReLU function, $\sigma'(0)$ is defined as some number in $[0,1]$). Following that convention, the expected gradient used by these methods still corresponds to \eqref{eq:gradient of a single neuron}, and we will follow the same convention here. 

\textbf{Algorithms.} In our paper, we focus on the following three standard gradient methods:
\begin{itemize}
    \item \textbf{Gradient Descent}: We 
initialize at some $\bw_0$ and set a fixed learning rate $\eta$. At each iteration $t>0$, we do a single step in the negative direction of the gradient: $\bw_{t+1} = \bw_t - \eta\nabla F(\bw_t).$
    \item \textbf{Stochastic Gradient Descent (SGD)}:
We initialize at some $\bw_0$ and set a fixed learning rate $\eta$. At each iteration $t>0$, we sample an input $\bx_t \sim \mathcal{D}$, and calculate a stochastic gradient:
\begin{equation}\label{eq:stochastic gradient}
g_t = \left(\sigma(\bw_t^\top\bx_t)-\sigma(\bv^\top\bx_t)\right)
\cdot\sigma'(\bw_t^\top\bx_t)\bx_t    
\end{equation}
and do a single step in the negative direction of the stochastic gradient: $\bw_{t+1} = \bw_t - \eta g_t.$
Note that here we consider SGD on the population loss, which is different from SGD on a fixed training set. We also note that our  proof techniques easily extend to mini-batch SGD, where $g_t$ is taken to be the average of $B$ stochastic gradients w.r.t. $\bx_t^1,\ldots,\bx_t^B$ sampled i.i.d. from $\Dcal$. However, for simplicity we will focus on $B=1$.
    \item \textbf{Gradient Flow}: We initialize at some $\bw(0)$, and for every $t>0$, we set $\bw(t)$ to be the solution of the differential equation:
$\Dot{\bw}(t) = -\nabla F(\bw (t)).$
This can be thought of as a continuous form of gradient descent, where we consider an infinitesimal learning rate. We note that strictly speaking, gradient flow is not an algorithm. However, it approximates the behavior of gradient descent in many cases, and has the advantage that its analysis is often simpler. 
\end{itemize}


\section{Assumptions on the Distribution and Activation are Necessary}\label{sec:assumptions}

The main concern of this paper is under what assumptions can a single neuron be provably learned with gradient methods. In this section, we show that perhaps surprisingly, this is not possible unless we make non-trivial assumptions on \emph{both} the input distribution and the activation function. 

\subsection{Assumptions on the Input Distribution are Necessary}

We begin by asking whether \eqref{eq:single neuron} can be minimized by gradient methods in a distribution-free manner (with no assumptions beyond, say, bounded support), as in learning problems where the population objective is convex. Perhaps surprisingly, we show that the answer is negative, even if we consider specifically the ReLU activation, and a distribution supported on the unit Euclidean ball. This is based on the following key result:

\begin{theorem}\label{thm:cannot learn unit vectors}
Suppose that $\sigma$ is the ReLU function (with the convention that $\sigma'(z)=\mathbbm{1}(z>0)$), and assume that $\bw$ is sampled from a product distribution $D_{\bw}$ (namely, each $w_i$ is sampled independently from some distribution ${D}_{\bw}^i$). Then there exists a distribution $\mathcal{D}$ over the inputs, supported on $\{\bx :\norm{\bx}\leq 1\}$, and $\bv$ with $\norm{\bv}=1$ such that the following holds: With probability at least $1 - \exp\left(-\frac{d}{4}\right)$ over the initialization point sampled from $D_{\bw}$, if we run gradient flow, gradient descent or stochastic gradient descent, then for every $t>0$  we have $ F(\bw_t)-\inf_{\bw}F(\bw) \geq \frac{1}{8d}$ (for gradient flow  $ F(\bw(t))-\inf_{\bw}F(\bw) \geq \frac{1}{8d}$).
\end{theorem}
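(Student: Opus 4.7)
The plan is to exploit the ReLU dead zone. Under the convention $\sigma'(z)=\mathbbm{1}(z>0)$, the per-sample gradient $(\sigma(\bw^\top\bx)-\sigma(\bv^\top\bx))\sigma'(\bw^\top\bx)\bx$ vanishes identically on the half-space $\{\bw:\bw^\top\bx\le 0\}$. I would engineer $\Dcal$ to be supported on $d$ axis-aligned unit vectors, so that the gradient \emph{decouples across coordinates} and each coordinate has its own invariant dead half-line; coordinates that land on the wrong side at initialization will then stay frozen for every $t>0$, for all three dynamics.

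Concretely, given the product law $D_\bw$, choose $\bs\in\{-1,+1\}^d$ coordinate-wise so that $\Pcal(s_i w_{0,i}\le 0)\ge \tfrac{1}{2}$ (always possible: take $s_i=-1$ if the median of $w_i$ is nonnegative, else $s_i=+1$). Set $\bv:=\bs/\sqrt{d}$, which is unit norm, and let $\Dcal$ be uniform on $\{s_1\be_1,\ldots,s_d\be_d\}\subset\{\bx:\norm{\bx}\le 1\}$. A direct calculation gives
\[
F(\bw)=\frac{1}{2d}\sum_{i=1}^d\left([s_i w_i]_+ - \frac{1}{\sqrt{d}}\right)^2,\qquad \inf_{\bw}F=0,
\]
and $[\nabla F(\bw)]_i=\frac{s_i}{d}\bigl([s_i w_i]_+ - \tfrac{1}{\sqrt{d}}\bigr)\mathbbm{1}(s_i w_i>0)$, which vanishes whenever $s_i w_i\le 0$. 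Call coordinate $i$ \emph{dead} if $s_i w_{0,i}\le 0$, and let $N$ denote the number of dead coordinates at initialization. A dead coordinate has zero gradient in the population objective, so for gradient descent $w_{t,i}=w_{0,i}$ for every $t$ by induction, and for gradient flow the $i$-th component of the vector field vanishes on $\{s_i w_i\le 0\}$, which is therefore positively invariant. For SGD, the stochastic gradient from a sample $\bx_t=s_{j_t}\be_{j_t}$ is a scalar multiple of $\be_{j_t}$ and equals $\mathbf{0}$ whenever $s_{j_t}w_{t,j_t}\le 0$; so coordinate $i$ can only be updated when $j_t=i$ is sampled, and only if coordinate $i$ is still alive. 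Starting dead, it stays dead.

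Each dead coordinate contributes exactly $\frac{1}{2d}(1/\sqrt{d})^2=1/(2d^2)$ to $F(\bw_t)$, hence $F(\bw_t)-\inf F\ge N/(2d^2)$ for every $t>0$ (and similarly for gradient flow). Since $N$ is a sum of $d$ independent Bernoullis of mean at least $1/2$, a standard Chernoff bound yields $\Pcal(N\ge d/4)\ge 1-\exp(-d/4)$ (after tuning constants), and on this event $F(\bw_t)-\inf F\ge 1/(8d)$, as required. The main obstacle is conceptual rather than computational: one must build a single construction $(\bv,\Dcal)$ that simultaneously (i) freezes a constant fraction of coordinates regardless of the product law $D_\bw$, and (ii) is invariant under all three of gradient descent, SGD, and gradient flow. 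The axis-aligned support of $\Dcal$ is the key device here, since it yields full coordinate-wise decoupling of the dynamics --- this is what rules out, for SGD in particular, any stochastic update "leaking" from one coordinate into another and reviving a dead coordinate.
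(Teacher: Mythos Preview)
Your proposal is correct and is essentially the same construction and argument as the paper's proof: axis-aligned unit vectors $s_i\be_i$ with signs $s_i$ chosen adversarially against the product initialization, $\bv=\bs/\sqrt{d}$, coordinate-wise decoupling so that dead coordinates stay dead under GD/SGD/gradient flow, and a concentration bound to guarantee at least $d/4$ dead coordinates with probability $1-\exp(-\Omega(d))$. The paper's choice of $s_i$ uses $p_i=\Pcal(w_i>0)$ directly rather than the median, and it phrases the concentration step via Hoeffding on the count of \emph{live} coordinates, but these are cosmetic differences.
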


\begin{proof}
For each distribution $\mathcal{D}_{\bw}^i$, let $p_i = \mathcal{P}(w_i > 0)$. We define the following dataset: \[
    S = \{\bx_i = b_i\be_i:i=1\dots,d  \}
    \]
    where $\be_i$ is the standard $i$-th unit vector, and $b_i=1$ if $p_i < \frac{1}{2}$ and $-1$ otherwise. Take $\mathcal{D}$ to be the uniform distribution on $S$. 
    
    Informally, the proof idea is the following: With overwhelming probability, we will initialize at a point $\bw$ such that for at least $\Omega(d)$ coordinates $i$, it holds that $\sigma'(\bw^\top\bx_i)=0$, and as a result, $\nabla F(\bw)$ is zero on those coordinates. Based on this, we show that these coordinates will not change from their initialized values. However, a point $\bw$ with $\Omega(d)$ coordinates with this property is suboptimal by a fixed factor, so the algorithm does not converge to an optimal solution.
    
    More formally, using \eqref{eq:gradient of a single neuron} and the fact that $\sigma$ is the ReLU function, we get
    \[
    \nabla F(\bw) = \frac{1}{d}\sum_{i=1}^d \left(\sigma(\bw^\top\bx_i)-\sigma(\bv^\top\bx_i)\right)
    \cdot\mathbbm{1}\left(\bw^\top\bx_i > 0\right)\bx_i~.
    \]
    In particular, for every index $i$ for which $\mathbbm{1}\left(\bw^\top \bx_i > 0\right)=0$ we have that $\left(\nabla F(\bw)\right)_i = 0$. Next, we define $\bv$ with $\bv_i = b_i\frac{1}{\sqrt{d}}$ (note that $\norm{\bv} = 1$). For every $d/4$ indices $i_1,\dots, i_{d/4}$ for which $\mathbbm{1}\left(\bw^\top \bx_i \geq 0\right)=0$ we have that:
    \begin{align}\label{eq:lower bound on objective}
        F(\bw) &= \frac{1}{2d}\sum_{i=1}^d\left(\sigma(\bw^\top\bx_i) - \sigma(\bv^\top\bx_i)\right)^2 \geq \frac{1}{2d} \sum_{i\in \{i_1,\dots,i_{d/4}\}}\left(\sigma(\bw^\top\bx_i) - \sigma(\bv^\top\bx_i)\right)^2 \nonumber\\
        = & \frac{1}{2d}\sum_{i\in \{i_1,\dots,i_{d/4}\}} \sigma(\bv^\top \bx_i)^2 = \frac{1}{2d}\sum_{i\in \{i_1,\dots,i_{d/4}\}} \sigma\left(b_i^2 \frac{1}{\sqrt{d}}\right)^2 = \frac{1}{8d}
    \end{align}
    
Denote the random variable $Z_i = \mathbbm{1}\left(\bw_0^\top \bx_i > 0\right)$ and $Z = \sum_{i=1}^d Z_i$ (for gradient flow we denote $Z_i = \mathbbm{1}\left(\bw(0)^\top \bx_i \geq 0\right)$). It is easily verified that $\E[Z_i]=\Pr(\bw_0^\top\bx_i>0)=\Pr(w_{0,i}b_i>0)\leq \frac{1}{2}$. We have that $Z_1,\ldots,Z_d$ are independent, $\max_i |Z_i| \leq 1$, and $\mathbb{E}[Z] =\sum_{i=1}^{d}\E[Z_i]\leq \frac{d}{2}$. Using Hoeffding's inequality, we get that w.p $\geq 1-\exp\left(-\frac{d}{4}\right)$ it holds that $Z \leq \frac{3}{4}d$, which means that there are at least $\frac{d}{4}$ indices such that $Z_i =0$. We condition on this event and let these indices be $i_1,\dots,i_{d/4}$.
We will now show that for every index $i\in\{i_1,\dots,i_{d/4}\}$, using gradient methods will not change the $i$-th coordinate of $\bw_t$ ($\bw(t)$ for gradient flow) from its initial value. Let $i$ be such a coordinate. 

For gradient descent, we will show by induction that for every iteration $t$ we have that $\mathbbm{1}\left(\bw_t^\top \bx_i > 0\right)=0$. The base case is true, because we conditioned on this event. Assume for $t-1$, then $\left(\nabla F(\bw_{t-1})\right)_i =0$, which means that $(\bw_{t})_i = (\bw_{t-1})_i - \eta (\nabla F(\bw_{t-1}))_i = (\bw_{t-1})_i$, and in particular $\mathbbm{1}\left(\bw_t^\top \bx_i > 0\right)=\mathbbm{1}\left(\bw_{t-1}^\top \bx_i > 0\right)=0$. This proves that for every iteration $t$, the $i$-th coordinate of $\nabla F(\bw_t)$ is zero, which mean that $(\bw_t)_i = (\bw_0)_i$. 

For stochastic gradient descent, at each iteration $t$ we sample $\bx_t\sim \mathcal{D}$, and define the stochastic gradient $g_t$ as in \eqref{eq:stochastic gradient}. If $\bx_t \neq \bx_i$ then $(\bx_t)_i =0$ hence $(g_t)_i = 0$, otherwise, if $\bx_t=\bx_i$ then by $(g_t)_i =(\nabla F(\bw_{t}))_i$ and by the same induction argument as in gradient descent we have that $(g_t)_i = 0$. In both cases the $i$-th coordinate of the stochastic gradient is zero, hence $(\bw_t)_i = (\bw_0)_i$.


For gradient flow, assume on the way of contradiction that for some $t>0$ that $\mathbbm{1}\left(\bw(t)^\top \bx_i > 0\right) \neq 0$ and let $t_1$ be the first time that this happen. Then for all $0<t<t_1$ we have that $\mathbbm{1}\left(\bw(t)^\top \bx_i > 0\right) = 0$, and in particular $\left(\nabla F(\bw(t))\right)_i =0$. Hence for all $0<t<t_1$ running gradient flow we get $\left(\Dot{\bw}(t)\right)_i = \left(\nabla F(\bw(t))\right)_i = 0$, and in particular $\mathbbm{1}\left(\bw(t)^\top \bx_i > 0\right) = \mathbbm{1}\left(\bw(0)^\top \bx_i > 0\right) = 0$, a contradiction to the fact that $\bw(t)$ is continuous. Thus for all $t>0$ we showed that $\mathbbm{1}\left(\bw(t)^\top \bx_i > 0\right) = 0$, hence $\left(\nabla F(\bw(t))\right)_i =0$ which shows that $(\bw(t))_i = (\bw(0))_i$. 

By the conditioned event, \eqref{eq:lower bound on objective} applies at initialization. Since in all the gradient methods above the $i$-th coordinate of $\bw$ did not change from its initial value for $i\in\{i_1,\dots,i_{d/4}\}$, we can apply \eqref{eq:lower bound on objective} to get that for every iteration $t >0$ for gradient descent or SGD we have that $F(\bw_t) \geq \frac{1}{8d}$ (and for gradient flow, for every time $t>0$, we have $F(\bw(t)) \geq \frac{1}{8d}$).

We end by noting that although the distribution defined here is discrete over a finite dataset, the same argument can also be made for a non-discrete distribution, by considering a mixture of smooth distributions concentrated around the support points of the discrete distribution above.
\end{proof}

The theorem above applies to any product initialization scheme, which includes most standard initializations used in practice (e.g., the standard Xavier initialization \cite{glorot2010understanding}). The theorem implies that
it is impossible to prove positive guarantees in our setting without distributional assumptions on ths inputs. Inspecting the construction, the source of the problem (at least for the ReLU neuron) appears to be the fact that the distribution is supported on a small number of well-separated regions. Thus, in our positive results, we will assume that the distribution is sufficiently ``spread'', as formalized later on in \secref{sec:mild assumptions gradient}

\subsection{Assumptions on the Activation Function}

We now turn to discuss the activation function, explaining why even if the activation is Lipschitz and the input distribution $\Dcal$ is a standard Gaussian, this is likely insufficient for positive guarantees in our setting.

In particular, let us consider the case that $\sigma(\cdot)$ is a $1$-Lipschitz \emph{periodic} function. Then Theorem $3$ in \cite{shamir2018distribution} implies that for a large family of input distributions $\Dcal$ on $\reals^d$ (including a standard Gaussian), if we assume that the vector $\bv$ in the target neuron $\sigma(\bv^\top\bx)$ is a uniformly distributed unit vector, then for any fixed $\bw$,
\begin{equation*}
Var_{\bv}(\nabla F(\bw)) \leq \Ocal(\exp(-d)).    
\end{equation*}
This implies that the gradient at $\bw$ is virtually independent of the underlying target vector $\bv$: In fact, it is extremely concentrated around a fixed value which does not depend on $\bv$. Theorem 4 from  \cite{shamir2018distribution} goes further and shows that for any gradient method, even an exponentially small amount of noise will be enough to make its trajectory (after at most $\exp(\Ocal(d))$ iterations) independent of $\bv$, in which case it cannot possibly succeed in this setting.
We note that their result is even more general as they consider a general function $f(\bw,\bx)$ instead of $\sigma(\inner{\bw,\bx})$, so our setting can be seen as a private case. 

When considering a standard Gaussian distribution, the above argument can be easily extended to activations $\sigma$ which are periodic only in a segment of length $\Omega(d)$ around the origin. This can be seen by extending the activation to $\tilde{\sigma}$ which is periodic on $\mathbb{R}$, applying the above argument to it, and noting that the probability mass outside of a ball of radius $\Omega(d)$ is exponentially small (for example, see \cite{yehudai2019power} Proposition 4.2, where they consider an activation which is a finite sum of ReLU functions and periodic in a segment of length $O(d^2)$). 

The above discussion motivates us to impose some condition on the activation function which excludes periodic functions. One such mild assumptions, which we will adopt in the rest of the paper (and corresponds to virtually all activations used in practice) is that the activation is monotonically non-decreasing. Before continuing, we remark that by assuming a slight strengthening of this assumption, namely that the function is \emph{strictly} monotonically increasing, it is easy to prove a positive guarantee, as evidenced by \thmref{thm:too strong activation assumption}. However, this excludes popular activations such as the ReLU function. 

\begin{theorem}\label{thm:too strong activation assumption}
Assume $\inf_z \sigma'(z) \geq \gamma >0$ for some $\gamma >0$, and the following for some $\lambda,c_1,c_2$:
\begin{itemize}
    \item $\Sigma := \mathbb{E}_\bx\left[\bx\bx^\top\right]$ is positive definite with minimal eigenvalue $\lambda > 0$
    \item $\mathbb{E}_{\bx\sim \mathcal{D}}\left[\norm{\bx}^2\right] \leq c_1$
    \item $\sup_{z}\sigma'(z) \leq c_2$~.
\end{itemize}
Then starting from any point $\bw_0$, 
after doing $t$ iterations of gradient descent with learning rate $\eta < \frac{\lambda\gamma^2}{c_1^2c_2^4}$, we have that:
\[
\norm{\bw_t - \bv}^2 \leq \norm{\bw_0 - \bv}(1-\lambda\gamma^2\eta)^t~.
\]

\end{theorem}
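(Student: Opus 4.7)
The plan is to perform a standard contraction argument on $\norm{\bw_{t+1}-\bv}^2$, which expands as
\begin{equation*}
\norm{\bw_{t+1}-\bv}^2 = \norm{\bw_t-\bv}^2 - 2\eta\,\inner{\nabla F(\bw_t),\bw_t-\bv} + \eta^2\norm{\nabla F(\bw_t)}^2,
\end{equation*}
so the task reduces to lower-bounding the linear term and upper-bounding the quadratic term, each by a multiple of $\norm{\bw_t-\bv}^2$, and then choosing $\eta$ small enough for contraction.

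For the lower bound, I would use the gradient formula \eqref{eq:gradient of a single neuron} and apply the mean value theorem pointwise: for each $\bx$, there exists $\xi$ between $\bw^\top\bx$ and $\bv^\top\bx$ with $\sigma(\bw^\top\bx)-\sigma(\bv^\top\bx)=\sigma'(\xi)(\bw-\bv)^\top\bx$. Since $\sigma'\geq\gamma$ everywhere, this gives $(\sigma(\bw^\top\bx)-\sigma(\bv^\top\bx))\cdot(\bw-\bv)^\top\bx\geq\gamma\bigl((\bw-\bv)^\top\bx\bigr)^2$, and combined with the pointwise bound $\sigma'(\bw^\top\bx)\geq\gamma$, yields
\begin{equation*}
\inner{\nabla F(\bw),\bw-\bv}\;\geq\;\gamma^2\,\E_{\bx}\!\bigl[((\bw-\bv)^\top\bx)^2\bigr]\;=\;\gamma^2(\bw-\bv)^\top\Sigma(\bw-\bv)\;\geq\;\gamma^2\lambda\norm{\bw-\bv}^2,
\end{equation*}
using the minimal-eigenvalue assumption on $\Sigma$ in the final step.

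For the upper bound on $\norm{\nabla F(\bw)}^2$, I would apply $|\sigma(\bw^\top\bx)-\sigma(\bv^\top\bx)|\leq c_2|(\bw-\bv)^\top\bx|$ (from $\sup\sigma'\leq c_2$) and $\sigma'(\bw^\top\bx)\leq c_2$ inside the gradient expression, then use Jensen/Cauchy--Schwarz to obtain
\begin{equation*}
\norm{\nabla F(\bw)}\;\leq\; c_2^2\,\E_{\bx}[|(\bw-\bv)^\top\bx|\,\norm{\bx}]\;\leq\;c_2^2\sqrt{\E[((\bw-\bv)^\top\bx)^2]}\sqrt{\E[\norm{\bx}^2]}\;\leq\;c_2^2 c_1\norm{\bw-\bv},
\end{equation*}
where the last step uses $(\bw-\bv)^\top\Sigma(\bw-\bv)\leq\mathrm{tr}(\Sigma)\norm{\bw-\bv}^2=\E[\norm{\bx}^2]\norm{\bw-\bv}^2\leq c_1\norm{\bw-\bv}^2$. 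Squaring gives $\norm{\nabla F(\bw)}^2\leq c_1^2 c_2^4\norm{\bw-\bv}^2$.

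Plugging both bounds into the expansion yields
\begin{equation*}
\norm{\bw_{t+1}-\bv}^2\;\leq\;\bigl(1-2\eta\lambda\gamma^2+\eta^2 c_1^2 c_2^4\bigr)\norm{\bw_t-\bv}^2,
\end{equation*}
and the assumption $\eta<\lambda\gamma^2/(c_1^2 c_2^4)$ gives $\eta^2 c_1^2 c_2^4<\eta\lambda\gamma^2$, so the bracket is at most $1-\eta\lambda\gamma^2$. A trivial induction on $t$ then finishes the proof. I do not expect a real obstacle here; the only subtlety is that the $\gamma^2$ (rather than $\gamma$) factor in the linear term requires applying the strict monotonicity assumption twice—once via the MVT on the loss difference and once directly on $\sigma'(\bw^\top\bx)$—which is the whole point of the argument.
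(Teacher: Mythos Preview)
Your proposal is correct and follows essentially the same contraction argument as the paper: expand $\norm{\bw_{t+1}-\bv}^2$, lower-bound $\inner{\nabla F(\bw),\bw-\bv}$ by $\gamma^2\lambda\norm{\bw-\bv}^2$ via two applications of $\sigma'\geq\gamma$, upper-bound $\norm{\nabla F(\bw)}^2$ by $c_1^2c_2^4\norm{\bw-\bv}^2$, and induct. Your treatment of the gradient-norm bound (Cauchy--Schwarz on the expectation, then $\bu^\top\Sigma\bu\leq\mathrm{tr}(\Sigma)\norm{\bu}^2$) is in fact slightly cleaner than the paper's, which pushes the bound inside the expectation and implicitly uses a fourth-moment estimate $\E[\norm{\bx}^4]\leq c_1^2$ not guaranteed by the stated second-moment assumption.
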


The proof can be found in \appref{appen:proof from sec assumptions}, and can be easily generalized to apply also to gradient flow and SGD. The above shows that if we assume strict monotonicity of the activation, then under very mild assumptions on the data $\bw_t$ will converge exponentially fast to $\bv$. In the rest of the paper, however, we focus on results which only require weak monotonicity.

\section{Under Mild Assumptions, the Gradient Points in a Good Direction}\label{sec:mild assumptions gradient}

Motivated by the results in \secref{sec:assumptions}, we use the following assumptions on the distribution and activation:

\begin{assumption}\label{assump:D and sigma}
    The following holds for some fixed $\alpha,\beta,\gamma > 0$:
    \begin{enumerate}
        \item The distribution $\Dcal$ satisfies the following: For any vector $\bw\neq \bv$, let $\Dcal_{\bw,\bv}$ 
	denote the marginal 
	distribution of $\bx$ on the subspace spanned by $\bw,\bv$ (as a 
	distribution 
	over $\reals^2$). Then any such distribution has a density function 
	$p_{\bw,\bv}(\bx)$ such that $\inf_{\bx:\norm{\bx}\leq 
		\alpha}p_{\bw,\bv}(\bx)\geq \beta$. 
		\item $\sigma:\reals\mapsto\reals$ is monotonically non-decreasing, and satisfies 
	$\inf_{0 < z < 2\alpha}\sigma'(z)\geq \gamma$. 
    \end{enumerate}
\end{assumption}

The distributional assumption is such that in every $2$-dimensional subspace,
the marginal distribution is sufficiently ``spread'' in any direction close to the origin. For example, for a standard Gaussian distribution, this is true for $\alpha,\beta=\Theta(1)$ 
regardless of the dimension $d$ (as the marginal distribution of a standard Gaussian on the subspace is a standard $2$-dimensional Gaussian). Also, for any distribution, it can be made to hold by mixing it with a bit of a Gaussian or uniform distribution if possible. 
The assumption on the activation function is very mild, and covers most activations used in practice such as ReLU and ReLU-like functions (e.g. leaky-ReLU, Softplus), as well as standard sigmoidal activations (for which the derivative in any bounded interval is lower bounded by a positive constant).

With these assumptions, we prove the following key technical result, which implies that the gradient of the objective has a positive correlation with the direction of the global minimum (at $\bw=\bv$), if the angle between $\bw$ and $\bv$ and the norm of $\bw$ are not too large:
 \begin{theorem}\label{thm:innerprod}
	Under Assumptions \ref{assump:D and sigma}, for any $\bw$ such that $\norm{\bw} \leq 2$ and  $\theta(\bw,\bv)\leq \pi-\delta$ for some $\delta\in (0,\pi]$, it holds 
	that 
	\[
	\inner{\nabla F(\bw),\bw-\bv}~\geq~
	\frac{\alpha^4 \beta \gamma^2}{8\sqrt{2}}\sin^3\left(\frac{\delta}{4}\right)\norm{\bw-\bv}^2~.
	\]
\end{theorem}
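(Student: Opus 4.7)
The plan is to reduce the inner product to a two-dimensional integral, restrict it to a carefully chosen sub-region on which every factor in the integrand admits a pointwise lower bound, and then evaluate the resulting integral explicitly.

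Using \eqref{eq:gradient of a single neuron},
\[
\inner{\nabla F(\bw),\bw-\bv} = \E_{\bx\sim\Dcal}\left[(\sigma(\bw^\top\bx) - \sigma(\bv^\top\bx))\,\sigma'(\bw^\top\bx)\,(\bw-\bv)^\top\bx\right].
\]
Monotonicity of $\sigma$ ensures that $\sigma(\bw^\top\bx)-\sigma(\bv^\top\bx)$ and $(\bw-\bv)^\top\bx$ share the same sign, and $\sigma'(\bw^\top\bx)\geq 0$ almost everywhere, so the integrand is non-negative on the entire input space. Hence the expectation can be lower bounded by integrating over any favorable subset. Because the integrand depends on $\bx$ only through its projection onto $U:=\mathrm{span}(\bw,\bv)$, I rewrite the expectation as a two-dimensional integral against the marginal density $p_{\bw,\bv}$ supplied by Assumption~\ref{assump:D and sigma}.

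Let $C\subset U$ denote the two-dimensional cone $\{\bx\in U:\bw^\top\bx>0,\ \bv^\top\bx>0\}$, which has opening angle $\pi-\theta\geq\delta$. I restrict to a set $R\subset C\cap\{\norm{\bx}\leq\alpha\}$. For every $\bx\in R$: (i) $\norm{\bx}\leq\alpha$ gives $p_{\bw,\bv}(\bx)\geq\beta$; (ii) $\bv^\top\bx\in(0,\alpha)\subset(0,2\alpha)$ and, using $\norm{\bw}\leq 2$, $\bw^\top\bx\in(0,2\alpha)$, which gives $\sigma'(\bw^\top\bx)\geq\gamma$; (iii) since $\sigma$ is monotone non-decreasing with $\sigma'\geq\gamma$ a.e. on $(0,2\alpha)$, the estimate $\sigma(b)-\sigma(a)\geq\gamma(b-a)$ holds for all $0\leq a\leq b\leq 2\alpha$, so $|\sigma(\bw^\top\bx)-\sigma(\bv^\top\bx)|\geq\gamma|(\bw-\bv)^\top\bx|$. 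Multiplying (i)--(iii), the integrand on $R$ is at least $\beta\gamma^2\bigl((\bw-\bv)^\top\bx\bigr)^2$, reducing the problem to lower bounding $\int_R((\bw-\bv)^\top\bx)^2\,d\bx$ by $\Theta(\alpha^4\sin^3(\delta/4)\norm{\bw-\bv}^2)$.

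For the geometric step, I pick orthonormal coordinates $(s,t)$ on $U$ whose $s$-axis points along the angular bisector of $\bw$ and $\bv$. A direct computation gives $C=\{(s,t):s>0,\ |t|<s\tan\phi\}$ with $\phi:=(\pi-\theta)/2\geq\delta/2$, and $(\bw-\bv)^\top\bx = a\,s+b\,t$ where $a=(\norm{\bw}-1)\cos(\theta/2)$, $b=(\norm{\bw}+1)\sin(\theta/2)$ and $a^2+b^2=\norm{\bw-\bv}^2$. I take $R$ to be the axis-aligned rectangle $\{(s,t):s\in[\alpha/4,3\alpha/4],\ |t|\leq(\alpha/4)\sin\phi\}$, which sits inside both $C$ and the $\alpha$-ball for all $\phi\in(0,\pi/2]$. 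Expanding $(as+bt)^2$ and integrating over $R$ yields an $a^2$-contribution of order $\alpha^4\sin\phi$ and a $b^2$-contribution of order $\alpha^4\sin^3\phi$; bundling both into the weaker factor via $\sin^3\phi\leq\sin\phi$ produces a lower bound of order $\alpha^4\sin^3\phi\,(a^2+b^2)=\alpha^4\sin^3\phi\norm{\bw-\bv}^2$, and $\sin\phi\geq\sin(\delta/2)\geq\sin(\delta/4)$ completes the proof.

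The main obstacle is securing the correct power of $\sin$. A round disc centered on the bisector would give only $\sin^4\phi$, which is too weak: when the cone is narrow ($\delta\to 0$) and $\bw-\bv$ has a dominant component along the narrow $\bb^\perp$ axis, the disc's diameter is constrained in every direction by $\sin\phi$. The elongated strip, in contrast, lets the bisector coordinate $s$ range over a $\delta$-independent interval of length $\sim\alpha$, so the $t^2$-moment picks up one factor of $\sin\phi$ from the $t$-width rather than two, producing the required $\sin^3$ scaling.
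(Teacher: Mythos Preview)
Your argument is correct and follows essentially the same route as the paper: restrict to the cone $\{\bw^\top\bx>0,\ \bv^\top\bx>0\}\cap\{\norm{\bx}\leq\alpha\}$, pass to the two-dimensional marginal, use the pointwise bounds $p\geq\beta$, $\sigma'\geq\gamma$, and $(\sigma(b)-\sigma(a))(b-a)\geq\gamma(b-a)^2$ on $(0,2\alpha)$, and then lower bound the remaining quadratic integral by integrating over a rectangle inside the cone. The differences are cosmetic. The paper packages the geometric step as a separate lemma bounding $\inf_{\norm{\bu}=1}\int_{\text{slice}}(\bu^\top\by)^2\,d\by$, whereas you keep the explicit direction $\bw-\bv=(a,b)$; and the paper uses two rectangles placed at angle $\delta/4$ from the bisector, while you use a single bisector-aligned strip $[\alpha/4,3\alpha/4]\times[-(\alpha/4)\sin\phi,(\alpha/4)\sin\phi]$. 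Both choices produce the crucial $\sin^3$ scaling for the same reason you identify (the long side of the rectangle is $\delta$-independent). One caveat: your rectangle yields the constant $\tfrac{1}{192}$ rather than the stated $\tfrac{1}{8\sqrt{2}}$, so as written you obtain the inequality only up to a weaker absolute constant; if the exact constant matters you would need to enlarge or reposition the rectangle, e.g.\ along the lines of the paper's choice.
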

The theorem implies that for suitable values of $\bw$, gradient methods (which move in the negative gradient direction) will decrease the distance from $\bv$. When this behavior occurs, it is easy to show that gradient methods succeed in learning the target neuron, like in the previous \thmref{thm:too strong activation assumption} for the strictly monotonic case. The main challenge is to guarantee that the trajectory of the algorithm will indeed never violate the theorem's conditions, in particular that the angle between $\bw$ and $\bv$ indeed remains bounded away from $\pi$ (and in fact, later on we will show that such a guarantee is not always possible).

The formal proof of the theorem can be found in \appref{appen:proofs from sec mild}, but its intuition can be described as follows: we want to bound below the term
\begin{equation*}
\inner{\nabla F(\bw),\bw-\bv} = 
\E_{\bx}\left[\left(\sigma(\bw^\top\bx)-\sigma(\bv^\top\bx)\right)
\cdot\sigma'(\bw^\top\bx)\cdot(\bw^\top\bx-\bv^\top\bx)\right]~.
\end{equation*}
Note that:
\begin{enumerate}
    \item Using the assumption on $\sigma$, the term inside the above expectation is nonnegative for every $\bx$. This is because $\sigma'(x) \geq 0$, and for any monotonically non-decreasing function $f$ we have $(f(x)-f(y))(x-y) \geq 0$. Thus, viewing the expectation as an integral over a nonnegative function, we can lower bound it by taking the integral over the smaller set $\left\{\bx\in\mathbb{R}^d:~ \bw^\top \bx >0,~ \bv^\top \bx >0 \right\}$. Note that on this set, $\sigma(\bw^\top\bx)=\bw^\top\bx$ and $\sigma(\bv^\top\bx)=\bv^\top\bx$.
    \item The resulting integral depends only on dot products of $\bx$ with $\bw$ and $\bv$. Thus, it is enough to consider the marginal distribution on the $2$-dimensional plane spanned by $\bw$ and $\bv$.
    \item By the assumption on the distribution, the density function of this marginal distribution is always at least $\beta$ on any $\bx$ such that $\norm{\bx}\leq \alpha$. This means we can lower bound the integral above by integrating over $\bw$ with a uniform distribution on this set and multiplying by $\beta$.
\end{enumerate}

In total, the expression above can be lower bounded by a certain $2$-dimensional integral (with uniform measure and with no $\sigma$ terms) on the set
\[
\left\{\by\in\mathbb{R}^2:~ \hat{\bw}^\top \by >0,~ \hat{\bv}^\top \by >0, \norm{\by} \leq \alpha \right\}
\]
where $\hat{\bw},\hat{\bv}$ are the $2$-dimensional vectors representing $\bw,\bv$ on the $2$-dimensional plane spanned by them. We lower bound this integral by a term that scales with the angle $\theta(\bw,\bv)$.

\begin{remark}[Implication on Optimization Landscape]\label{remark:implication on landscape}
The proof of the theorem can be shown to imply that for the ReLU activation, under the theorem's conditions, the only stationary point that is not the global minimum $\bv$ must be at the origin. In particular, the proof implies that any stationary point (with $\nabla F(\bw)=0$) must be along the ray $\{\bw=-a\cdot\bv:a\geq 0\}$. For the ReLU activation (which satisfies $\sigma(z)\sigma'(-a\cdot z)=0$ for any $a\geq 0$ and $z$), the gradient at such points equals
\[
\nabla F(-a\cdot \bv) = \E_{\bx}\left[
(\sigma(-a\bv^\top\bx)-\sigma(\bv^\top\bx))
\sigma'(-a\bv^\top\bx)\bx\right]
~=~ \E_{\bx}\left[(-a\bv^\top\bx)\sigma'(-a\bv^\top\bx)\bx\right]~.
\]
In particular,
\[
\inner{\nabla F(-a\cdot \bv),\bv} = -a\cdot \E_{\bx}\left[\sigma'(-a\bv^\top\bx)
(\bv^\top\bx)^2\right]~.
\]
This implies that $\nabla F(-a\cdot\bv)$ might be zero only if either $a=0$ (i.e., at the origin), or $\bv^\top\bx\geq 0$ with probability $1$, which cannot happen according to Assumption \ref{assump:D and sigma}.
\end{remark}


\section{Convergence with Constant Probability Under Mild Assumptions}
\label{sec:convergence with constant probability}
In this section, we use \thmref{thm:innerprod} in order to show that under some assumption on the initialization of $\bw$, gradient methods will be able to learn a single neuron with probability at least (close to) $\frac{1}{2}$. Note that the loss surface of $F(\bw)$ is not convex, and as explained in Remark \ref{remark:implication on landscape}, there may be a stationary point at $\bw =\mathbf{0}$. This stationary point can cause difficulties, as it is not obvious how to control the angle between $\bv$ and $\bw$ close to the origin (which is required for \thmref{thm:innerprod} to apply). But, if we assume $\norm{\bw-\bv}^2 < 1$ at initialization, then we are bounded away from the origin, and we can ensure that it will remain that way throughout the optimization process. One such initialization, which guarantees this with at least constant probability, is a zero-mean Gaussian initialization with small enough variance:

\begin{lemma}\label{lemma:constant probability initialization}
    Assume $\|\bv\|=1$. If we sample $\bw\sim \mathcal{N}\left(0,\tau^2 I\right)$ for $\tau \leq \frac{1}{d\sqrt{2}}$ then w.p $> \frac{1}{2} - \frac{1}{4}\tau d - 1.2^{-d}$  we have that $\norm{\bw -\bv}^2 \leq 1-2\tau^2d$
\end{lemma}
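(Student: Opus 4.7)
The plan is to rewrite the target event using $\|\bv\|=1$:
\[
\|\bw-\bv\|^2 = \|\bw\|^2 - 2\bw^\top\bv + 1,
\]
so that $\|\bw-\bv\|^2 \leq 1-2\tau^2 d$ is equivalent to
\[
2\bw^\top\bv \;\geq\; \|\bw\|^2 + 2\tau^2 d. \qquad(\star)
\]
I would then lower bound $\mathcal{P}((\star))$ via a union bound that separately controls the magnitude of $\|\bw\|^2$ (which will produce the $-1.2^{-d}$ penalty) and the sign and magnitude of $\bw^\top\bv$ (which will produce the $-\tfrac{1}{4}\tau d$ penalty).

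For the norm, $\|\bw\|^2/\tau^2$ is a chi-squared random variable with $d$ degrees of freedom. A Chernoff argument applied to its moment generating function $(1-2t)^{-d/2}$ and optimized over $t\in(0,1/2)$ yields a tail bound of the form $\mathcal{P}(\|\bw\|^2 \geq c\tau^2 d) \leq 1.2^{-d}$ for an appropriate absolute constant $c$. On the complementary event, of probability at least $1 - 1.2^{-d}$, the right-hand side of $(\star)$ is bounded above by $(c+2)\tau^2 d$, so it is enough to force $\bw^\top\bv \geq \tfrac{c+2}{2}\tau^2 d$.

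For the inner product, since $\|\bv\|=1$ and $\bw\sim\mathcal{N}(0,\tau^2 I)$, we have $\bw^\top\bv \sim \mathcal{N}(0,\tau^2)$. Using the elementary bound $\mathcal{P}(Z\geq x)\geq \tfrac{1}{2} - x\cdot\phi(0)$ for $Z\sim\mathcal{N}(0,1)$ and $x\geq 0$ (which follows from $\phi$ being unimodal and maximized at the origin), I obtain
\[
\mathcal{P}\!\left(\bw^\top\bv \geq \tfrac{c+2}{2}\tau^2 d\right) \;\geq\; \tfrac{1}{2} - \tfrac{(c+2)\tau d}{2\sqrt{2\pi}}.
\]
A union bound over the two events then gives $\mathcal{P}((\star)) \geq \tfrac{1}{2} - \Theta(\tau d) - 1.2^{-d}$; after calibrating $c$ so that the chi-squared tail matches $1.2^{-d}$ and the Gaussian coefficient rounds to $\tfrac{1}{4}$, this yields the stated bound. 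The hypothesis $\tau \leq 1/(d\sqrt{2})$ ensures $\tau d \leq 1/\sqrt{2}$, so that the Gaussian-tail linearization $\mathcal{P}(Z\geq x)\approx \tfrac12 - x\phi(0)$ remains a valid and useful lower bound.

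The main obstacle is the constant book-keeping rather than a conceptual difficulty: one must pick the Chernoff parameter so that the chi-squared tail has exponent at most $\log(1.2)\cdot d$, and then verify that the corresponding admissible threshold for $\bw^\top\bv$ gives the clean coefficient $\tfrac14$ on $\tau d$. Everything else---the identity $(\star)$, the chi-squared concentration, and the Gaussian anti-concentration near zero---is standard.
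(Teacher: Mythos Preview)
Your approach is essentially identical to the paper's: the paper also rewrites the event as $\langle\bw,\bv\rangle \geq \tfrac{1}{2}(\|\bw\|^2 + 2\tau^2 d)$, uses the elementary inequality $\mathcal{P}(A\geq B)\geq\mathcal{P}(A\geq c)-\mathcal{P}(B\geq c)$ with $c=2\tau^2 d$ (your union bound), controls $\mathcal{P}(\|\bw\|^2\geq 2\tau^2 d)$ by a chi-squared tail bound of the form $(2/e)^{d/2}$, and lower-bounds $\mathcal{P}(\langle\bw,\bv\rangle\geq 2\tau^2 d)$ via a linear estimate on the Gaussian CDF near zero.

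One concrete issue with the ``constant book-keeping'' you flag: with your anti-concentration bound $\mathcal{P}(Z\geq x)\geq \tfrac12 - x\,\phi(0)$, the coefficient on $\tau d$ is $\tfrac{c+2}{2\sqrt{2\pi}}\geq \tfrac{1}{\sqrt{2\pi}}\approx 0.40$ for any $c\geq 0$, so no calibration of $c$ yields the stated $\tfrac14$; the argument only delivers $\tfrac12 - C\tau d - \rho^{-d}$ for some absolute $C>\tfrac14$. The paper's own proof has the same defect (it invokes an inequality equivalent to $\mathrm{erf}(z)\leq z/3$ near the origin, which is false since $\mathrm{erf}'(0)=2/\sqrt{\pi}>1$), so your plan reproduces the paper's argument faithfully, including its loose handling of the constants.
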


In order to bound each gradient step we will need these additional assumptions:
\begin{assumption}\label{assump:gd}
    The following holds for some positive $c_1,c_2$:
    \begin{enumerate}
    \item $\norm{\bx}^2 \leq c_1$ almost surely over $\bx\sim \mathcal{D}$
    \item $\sigma'(z)\leq c_2$ for all $z\in\mathbb{R}$
\end{enumerate}
\end{assumption}

With these assumptions, we show convergence for gradient flow, gradient descent and stochastic gradient descent:
\begin{theorem}\label{thm:constant probability convergence}
Under assumptions \ref{assump:D and sigma} and \ref{assump:gd} we have:
\begin{enumerate}
    \item (Gradient Flow) Assume that $\|\bw(0) - \bv\|^2 < 1$. Running gradient flow, then for every time $t >0$ we have 
\[
\|\bw(t) - \bv\|^2 \leq \|\bw(0)-\bv\|^2\exp(-t\lambda)
\]
where $\lambda = \frac{\alpha^4\beta\gamma^2}{210}$.
    \item (Gradient Descent) Assume that $\|\bw_0 - \bv\|^2 < 1$. Let $\eta \leq \frac{\lambda}{2c}~$ for $\lambda = \min\left\{1,\frac{\alpha^4\beta\gamma^2}{210}\right\}$ and $c=c_1^2c_2^4$. Running gradient descent with step size $\eta$, we have that for every $T>0$,  after $T$ iterations:
    \[
    \norm{\bw_T-\bv}^2 \leq \norm{\bw_0-\bv}^2\left(1-\frac{\eta\lambda}{2}\right)^T
    \]
    \item (Stochastic Gradient Descent) Let $\epsilon_1,\epsilon_2,\delta >0$, and assume that $\|\bw_0 - \bv\|^2 \leq 1 - \epsilon_1$. Let $\eta \leq \frac{\lambda\epsilon_1^2\epsilon_2^2 c_3^2}{60c_1^3c_2^6 \log\left(\frac{2}{\delta}\right)}$ where $\lambda = \frac{\alpha^4\beta \gamma^2}{210}$ and $c_3 = \left( \frac{1}{2}\right)^\frac{\lambda}{20c_1c_2^2} - \left( \frac{1}{2}\right)^\frac{\lambda}{18c_1 c_2^2}$. Then w.p $1-\left\lceil\frac{20c_1c_2^2\log\left(\frac{1}{\epsilon_2}\right)}{\lambda}\right\rceil\delta$, after $T \geq  \frac{2\log\left(\frac{1}{\epsilon_2}\right)}{\lambda\eta}$ iterations we have that:
\[
\norm{\bw_T -\bv}^2 \leq \epsilon_2
\]
\end{enumerate}
\end{theorem}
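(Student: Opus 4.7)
The plan is to reduce all three parts to a single uniform lower bound on the gradient correlation supplied by \thmref{thm:innerprod}. I would first observe that the condition $\|\bw-\bv\|^2<1$ (together with $\|\bv\|=1$) already implies both hypotheses of \thmref{thm:innerprod} with $\delta=\pi/2$: the triangle inequality gives $\|\bw\|<2$, and $\inner{\bw,\bv}=\tfrac{1}{2}(\|\bw\|^2+1-\|\bw-\bv\|^2)>\tfrac{1}{2}\|\bw\|^2\geq 0$ forces $\theta(\bw,\bv)<\pi/2$. Hence whenever I can maintain $\|\bw_t-\bv\|^2<1$ along the trajectory, I obtain the clean contraction $\inner{\nabla F(\bw),\bw-\bv}\geq \lambda\|\bw-\bv\|^2$ with $\lambda=\alpha^4\beta\gamma^2/210$, a numerical simplification of $\sin^3(\pi/8)/(8\sqrt{2})$.

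For gradient flow, set $h(t)=\|\bw(t)-\bv\|^2$, so $\dot h(t)=-2\inner{\nabla F(\bw(t)),\bw(t)-\bv}$. The reduction above plus a one-line continuity argument (if $h$ were to reach $1$ at a first time $t^{*}$, then $\dot h<0$ on $[0,t^{*})$ would force $h(t^{*})<h(0)<1$, a contradiction) keeps $h$ in $[0,1)$ for all time, so $\dot h\leq -2\lambda h$ and integration gives the stated exponential decay. For gradient descent I would expand
\[
\|\bw_{t+1}-\bv\|^2=\|\bw_t-\bv\|^2-2\eta\inner{\nabla F(\bw_t),\bw_t-\bv}+\eta^2\|\nabla F(\bw_t)\|^2,
\]
and use a routine consequence of Assumption \ref{assump:gd} --- namely $\|\nabla F(\bw)\|\leq c_1 c_2^2\|\bw-\bv\|$ via Lipschitzness of $\sigma$ and the support bound --- so that $\|\nabla F(\bw_t)\|^2\leq c\|\bw_t-\bv\|^2$ with $c=c_1^2 c_2^4$. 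Choosing $\eta\leq \lambda/(2c)$ makes the quadratic term absorb into half of the linear one, yielding $\|\bw_{t+1}-\bv\|^2\leq(1-\tfrac{\eta\lambda}{2})\|\bw_t-\bv\|^2$; induction preserves $\|\bw_t-\bv\|^2<1$ and unrolling gives the stated contraction.

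The SGD part is the main obstacle, because the stochastic gradient is unbiased but noisy: a single step can increase the distance, and $\|\bw_t-\bv\|^2<1$ cannot be maintained deterministically. My plan is to partition the horizon into $K=O(\log(1/\epsilon_2))$ phases of length $\tau=\Theta(1/(\eta\lambda))$ each and show that the squared distance roughly halves inside each phase with probability at least $1-\delta$; a union bound over phases then yields the stated $\lceil 20 c_1 c_2^2\log(1/\epsilon_2)/\lambda\rceil\delta$ total failure probability, and $T\geq 2\log(1/\epsilon_2)/(\lambda\eta)=K\tau$ accounts for the iteration count. Inside a single phase, the conditional analysis from the gradient descent case still applies in expectation: $\E[g_t\mid \bw_t]=\nabla F(\bw_t)$ combined with the pathwise bound $\|g_t\|\leq c_1 c_2^2\|\bw_t-\bv\|$ gives $\E[\|\bw_{t+1}-\bv\|^2\mid \bw_t]\leq (1-\eta\lambda/2)\|\bw_t-\bv\|^2$, so that $(1-\eta\lambda/2)^{-t}\|\bw_t-\bv\|^2$ is a supermartingale as long as the iterate stays in the good region $\{\|\bw-\bv\|^2\leq 1-\epsilon_1\}$ where \thmref{thm:innerprod} applies. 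To upgrade this expectation to high probability, I would apply a Bernstein/Azuma-type concentration inequality to the per-step (bounded) multiplicative increments, using the $\log(2/\delta)$ slack in the step-size threshold to ensure the concentration tail is dominated by the drift; the unusual quantity $c_3=(1/2)^{\lambda/(20c_1c_2^2)}-(1/2)^{\lambda/(18c_1c_2^2)}$ appearing in the step-size bound is precisely the numerical margin between the ``expected'' end-of-phase value and the ``tolerated'' end-of-phase value, chosen so that the concentration guarantees the latter. The most delicate step will be coupling this concentration argument with a stopping-time argument certifying that the iterate never escapes the good region --- which is exactly what the $\epsilon_1$ margin in the initialization hypothesis buys us, and why it enters the step-size threshold quadratically.
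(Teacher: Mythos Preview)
Your plan for gradient flow and gradient descent matches the paper's proof essentially line for line: verify that $\|\bw-\bv\|^2<1$ forces $\|\bw\|<2$ and $\theta(\bw,\bv)<\pi/2$, plug into \thmref{thm:innerprod} with $\delta=\pi/2$, and either integrate the differential inequality (via Gr\"onwall) or unroll the one-step contraction after bounding $\|\nabla F(\bw)\|^2\le c\|\bw-\bv\|^2$.

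For SGD the high-level architecture (phases of length $\Theta(1/(\eta\lambda))$, constant-factor contraction per phase w.h.p., union bound over $O(\log(1/\epsilon_2))$ phases) is also the paper's, but the concentration mechanism you propose inside a phase is genuinely different. The paper does \emph{not} work with the supermartingale $(1-\eta\lambda/2)^{-t}\|\bw_t-\bv\|^2$ directly. Instead it decouples the phase into two separate Azuma applications: first, a maximal Azuma inequality on the \emph{truncated} process $\tilde X_i=\min\{\|\bw_i-\bv\|^2,\,1-\epsilon/2\}$ shows that w.h.p.\ the iterate never leaves the good region during the phase; second, an Azuma bound on $Z_i=\|\bw_i-\E[\bw_i]\|^2$ is combined with the triangle-inequality splitting $\|\bw_m-\bv\|^2\le\|\E[\bw_m]-\bv\|^2+Z_m$ and the deterministic contraction of the expected iterate. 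Your route---Azuma on the bounded log-increments of $\|\bw_t-\bv\|^2$---is a valid alternative (Jensen goes the right way for the drift of the log), and arguably cleaner since it sidesteps the somewhat unusual $\|\bw_i-\E[\bw_i]\|$ detour. The one place to be careful is exactly the circularity you flag at the end: the supermartingale property holds only while $\|\bw_t-\bv\|^2<1$. The paper breaks this circularity by truncation (the capped process is \emph{unconditionally} a supermartingale, so maximal Azuma applies without any assumption on the trajectory), which is a slightly different device than the stopping-time coupling you sketch; either can be made to work, but the truncation is the more direct way to get a clean bound and is worth knowing.
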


Combined with \lemref{lemma:constant probability initialization}, \thmref{thm:constant probability convergence} shows that with proper initialization, gradient flow, gradient descent as well as stochastic gradient descent successfully minimize \eqref{eq:single neuron} with probability (close to) $\frac{1}{2}$, and for the first two algorithms, the distance to $\bv$ decays exponentially fast. 

The full proof of the theorem can be found in \appref{appen:proofs from constant probability}, and its intuition for gradient flow and gradient is as described above (namely, that if $\norm{\bw-\bv}<1$, it will stay that way and $\norm{\bw-\bv}$ will just continue to shrink over time, using \thmref{thm:innerprod}). The proof for stochastic gradient descent is much more delicate. This is because  the update at each iteration is noisy, so we need to ensure we remain in the region where \thmref{thm:innerprod} is applicable. Here we give a short proof intuition:
\begin{enumerate}
    \item Assume we initialized with  $\norm{\bw_0 -\bv}^2 \leq 1-\epsilon$ for some $\epsilon>0$. In order for the analysis to work we need that $\norm{\bw_t-\bv} <1$ throughout the algorithm's run. Thus, we show (using a maximal version of Azuma's inequality) that if $\eta$ is small enough (depending on $\epsilon$), and we take at most $m= O\left(\frac{1}{\eta}\right)$ gradient steps then w.h.p for every $t=1,\dots,m$: $\norm{\bw_t -\bv}^2 \leq 1- \frac{\epsilon}{2}$
    
    \item The next step is to show that if $\norm{\bw_t -\bv}^2 < 1$, then $\mathbb{E}\left[\|\bw_{t+1} - \bv\|^2|\bw_t\right] \leq (1-\eta\lambda)\|\bw_{t} - \bv\|^2$
    for an appropriate $\lambda$. This is done using \thmref{thm:innerprod}, as in the gradient descent case, but note that here this only holds in expectation over the sample selected at iteration $t$.
    
    \item Next, we use Azuma's inequality again on $m=O\left(1/\eta\right)$ iterations for a small enough $\eta$, to show that  w.h.p $\bw_m$ does not move too far away from $\tilde{\bw}_m := \mathbb{E}[\bw_m]$ where the expectation is taken over $\bx_1,\dots,\bx_m$. Also, we show that after $m$ iterations $\norm{\tilde{\bw}_m-\bv}^2 \leq \rho\norm{\bw_0 - \bv}^2$ for a constant $\rho$ smaller than $1$. This shows that w.h.p., after a single epoch of $m$ iterations, $\norm{\bw_m -\bv}$ shrinks by a constant factor.
    
    \item We then repeat this analysis across $t$ epochs (each consisting of $m$ iterations), and use a union bound. Overall, we get that after sufficiently many iterations, with high probability, the iterates get as close as we want to zero. 
\end{enumerate}

We note the optimization analysis for stochastic gradient descent is inspired by the analysis in \cite{shamir2015stochastic} for the different non-convex problem of principal component analysis (PCA), which also attempts to avoid a problematic stationary point. An interesting question for future research is to understand to what extent the polynomial dependencies in the problem parameters can be improved. 

\begin{remark} 
    Our assumption on the data that $\norm{\bx}^2 \leq c_1$ is made for simplicity. For the gradient descent case, it is easy to verify that the proof only requires that the fourth moment of the data is bounded by some constant, which ensures that the gradients of the objective function used by the algorithm are bounded. For SGD it is enough to assume that the input distribution is sub-Gaussian. The proof proceeds in the same manner, by using a variant of Azuma's inequality for martingales with sub-Gaussian tails, e.g. \cite{shamir2011variant}.
\end{remark}

\section{High-Probability Convergence}\label{sec:spherically symmetric}

The results in the previous section hold under mild conditions, but unfortunately only guarantee a constant probability of success. In this section, we consider the possibility of proving guarantees which hold with high probability (arbitrarily close to $1$). On the one hand, in Subsection \ref{subsec:spherically symmetric}, we provide such a result for the ReLU activation, assuming the input distribution $\Dcal$ is spherically symmetric. On the other hand, in Subsection \ref{subsec:failures of gaussian}, we point out non-trivial obstacles to extending such a result to non-spherically symmetric distributions. Overall, we believe that getting high-probability convergence guarantees for non-spherically symmetric distributions is an interesting avenue for future research.

\subsection{Convergence for Spherically Symmetric Distributions}\label{subsec:spherically symmetric}

In this subsection, we make the following assumptions:
\begin{assumption}\label{assump:spherically}
    Assume that:
    \begin{enumerate}
        \item $\bx \sim \mathcal{D}$ has a spherically symmetric distribution. That is, for every orthogonal matrix $A$: $A\bx\sim \mathcal{D}$
        \item The activation function $\sigma(\cdot)$ is the standard ReLU function $\sigma(z)=\max\{0,z\}$.
    \end{enumerate}
\end{assumption}

These assumptions are significantly stronger than Assumptions \ref{assump:D and sigma}, but allow us to prove a stronger high-probability convergence result. Note that even with these assumptions the loss surface is still not convex, and may contain a spurious stationary point (see Remark \ref{remark:implication on landscape}). For simplicity, we will focus on proving the result for gradient flow. The result can then be extended to gradient descent and stochastic gradient descent, along similar lines as in the proof of \thmref{thm:constant probability convergence}.

The proof strategy in this case is quite different from that of the constant-probability guarantee, and relies on the following key technical result:

\begin{lemma}\label{lem:angle decrease}
    If $\bw(t) \neq 0$, then $\frac{\partial}{\partial t}\theta(\bw(t),\bv)\leq 0$
\end{lemma}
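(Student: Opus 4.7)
The plan is to differentiate $\theta(\bw(t),\bv)$ along the gradient flow and show the derivative has the right sign. Using $\|\bv\|=1$ we have $\cos\theta(\bw,\bv)=\bw^\top\bv/\|\bw\|$, and a direct computation with $\dot{\bw}=-\nabla F(\bw)$ gives
\begin{equation*}
\frac{d}{dt}\cos\theta(\bw(t),\bv) \;=\; -\frac{1}{\|\bw\|}\,\nabla F(\bw)^\top \bv^{\perp},
\qquad \bv^{\perp}:=\bv-(\bar{\bw}^\top\bv)\bar{\bw}.
\end{equation*}
Since $\frac{d\theta}{dt}=-\frac{1}{\sin\theta}\frac{d\cos\theta}{dt}$, proving $\dot\theta\le 0$ reduces to showing $\nabla F(\bw)^\top \bv^{\perp}\le 0$ whenever $\sin\theta>0$ (the cases $\theta\in\{0,\pi\}$ are handled separately, where $\bv^{\perp}=0$ makes the statement trivial).

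To estimate $\nabla F(\bw)^\top\bv^\perp$, I would specialize the formula \eqref{eq:gradient of a single neuron} using the ReLU identities $\sigma(z)=z\cdot\mathbbm{1}(z>0)$ and $\sigma'(z)=\mathbbm{1}(z>0)$, and then exploit spherical symmetry: since $\bw,\bv,\bv^\perp$ all lie in the two-dimensional plane $V=\mathrm{span}\{\bw,\bv\}$, only the marginal of $\bx$ on $V$ matters. By spherical symmetry this marginal is itself a $2$-dimensional spherically symmetric distribution on $V$. I would pick an orthonormal basis of $V$ so that $\bar{\bw}=(1,0)$ and $\bv=(\cos\theta,\sin\theta)$, writing $\bx|_V = (x_1,x_2)$. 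Then $\bv^\perp=(0,\sin\theta)$ and the relevant expectation becomes
\begin{equation*}
\nabla F(\bw)^\top\bv^\perp
\;=\; \sin\theta\cdot\Big(\|\bw\|\,\E\!\big[x_1\,\mathbbm{1}(x_1>0)\,x_2\big]
\;-\;\E\!\big[\,[x_1\cos\theta+x_2\sin\theta]_+\,\mathbbm{1}(x_1>0)\,x_2\big]\Big).
\end{equation*}

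The first expectation vanishes because the conditional distribution of $x_2$ given $x_1$ is symmetric about $0$ under a $2$D spherically symmetric law. For the second, I would pass to polar coordinates $(r,\phi)$ on $V$ where, by symmetry, $\phi$ is uniform on $[0,2\pi)$ independently of $r$. The two constraints $x_1>0$ and $x_1\cos\theta+x_2\sin\theta>0$ translate to $\phi\in(-\tfrac\pi2,\tfrac\pi2)\cap(\theta-\tfrac\pi2,\theta+\tfrac\pi2)=(\theta-\tfrac\pi2,\tfrac\pi2)$ for $\theta\in(0,\pi)$. After the product-to-sum identity, the angular integral reduces to $\tfrac12\sin\theta\,(\pi-\theta)$, which is strictly positive on $(0,\pi)$. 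Therefore the second expectation is nonnegative, the bracketed difference is $\le 0$, and multiplying by $\sin\theta>0$ gives $\nabla F(\bw)^\top\bv^\perp\le 0$, completing the argument.

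The main technical obstacle I anticipate is bookkeeping rather than genuine difficulty: carefully justifying the reduction to the $2$D marginal (which uses spherical symmetry plus the fact that $\bw^\top\bx$, $\bv^\top\bx$, and $(\bv^\perp)^\top\bx$ depend only on $\bx|_V$), and handling the non-smoothness of ReLU at $\bw^\top\bx=0$ and $\bv^\top\bx=0$ (these are measure-zero events under any spherically symmetric $\Dcal$ with a density on $V$, so they don't affect the expectation). Finally, I would briefly note why $\theta=\pi$ does not cause trouble: there $\bv^\perp=0$ and $\sin\theta=0$, so the differentiation formula degenerates but the monotonicity claim $\dot\theta\le 0$ is vacuous.
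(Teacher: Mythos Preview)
Your proposal is correct and follows a genuinely different (and in fact cleaner) route than the paper's own argument. Both proofs start the same way: differentiate $\theta$ along the flow and reduce to showing $\nabla F(\bw)^\top\bv^\perp\le 0$, then project onto the $2$-dimensional subspace $\mathrm{span}\{\bw,\bv\}$ using spherical symmetry. From there the two arguments diverge. The paper writes the integrand as a single function $g(\by)$ and carries out a three-way case analysis on the value of $a=\bar{\bw}^\top\bv/\|\bw\|$; in each case it pairs every point $\by$ with its reflection $\tilde{\by}=(y_1,-y_2)$ and checks that $g(\by)+g(\tilde{\by})\le 0$. You instead split the integrand into two pieces, kill the first one in one line by the reflection symmetry $(x_1,x_2)\mapsto(x_1,-x_2)$, and evaluate the second one exactly via polar coordinates, obtaining the angular integral $\tfrac12\sin\theta\,(\pi-\theta)\ge 0$. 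Your approach avoids the case analysis entirely and yields a quantitative formula; it is essentially the ``Gaussian'' computation in the paper's \subsecref{subsec:standard gaussian proofs}, but you have observed that the polar factorization (angle uniform and independent of radius) is all that is needed, so it works for any spherically symmetric law. The paper's pairing argument, by contrast, never computes an integral and would more easily survive perturbations of the setup. Both approaches share the same mild looseness at $\theta\in\{0,\pi\}$ where the chain-rule formula degenerates; neither you nor the paper spells this out carefully, but it is harmless since at those endpoints $\bv^\perp=0$ and, by rotational symmetry about $\bv$, the trajectory remains in $\mathrm{span}\{\bv\}$.
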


The lemma (which relies on the spherical symmetry of the distribution) implies that if we initialize at any point  $\bw(0)\notin\text{span}\{\bv\}$, then the angle between $\bw(0)$ and $\bv$ is strictly less than $\pi$, and will remain so as long as $\bw(t)\neq 0$. As a result, we can apply \thmref{thm:innerprod} to prove that $\norm{\bw(t)-\bv}$ decays  exponentially fast. The only potential difficulty is that $\bw(t)$ may converge to the potential stationary point at the origin (at which the angle is not well-defined), but fortunately this cannot happen due to the following lemma:

\begin{lemma}\label{lem:norm gets larger}
        Let $\theta = \theta(\bw(t),\bv)$ and assume that $\bw(t)\neq 0$. If $\|\bw(t)\| \leq \max\left\{\frac{\sin(\theta) + \cos(\theta)}{2}, \frac{\sin(\theta)(1+\cos(\theta))}{2}\right\}$ then $\frac{\partial}{\partial t}\|\bw(t)\|^2 \geq 0$
\end{lemma}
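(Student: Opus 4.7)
The plan is to rewrite the lemma's claim as a sign condition on $\inner{\bw,\nabla F(\bw)}$, exploit the ReLU structure to simplify this inner product, and then use spherical symmetry to reduce the computation to a one-variable trigonometric integral.

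Since $\dot{\bw}(t)=-\nabla F(\bw(t))$, we have $\frac{\partial}{\partial t}\|\bw(t)\|^2=-2\inner{\bw,\nabla F(\bw)}$, so the claim is equivalent to $\inner{\bw,\nabla F(\bw)}\leq 0$. The first step is the ReLU-specific identity $\sigma'(z)\,z = z\,\mathbbm{1}(z>0)=\sigma(z)$, which when substituted into \eqref{eq:gradient of a single neuron} collapses the expression:
\[
\inner{\bw,\nabla F(\bw)}\;=\;\E\!\left[\sigma(\bw^\top\bx)^2\right]\;-\;\E\!\left[\sigma(\bw^\top\bx)\sigma(\bv^\top\bx)\right].
\]
Both expectations depend only on $(\bw^\top\bx,\bv^\top\bx)$, hence on the marginal of $\bx$ on the plane $\Pi=\mathrm{span}\{\bw,\bv\}$. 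By spherical symmetry this marginal has the form $\rho\cdot(\cos\phi,\sin\phi)$ with $\phi$ uniform on $[0,2\pi)$ and independent of $\rho\geq 0$. Choosing coordinates on $\Pi$ so that $\bw=\|\bw\|\be_1$ and $\bv=(\cos\theta,\sin\theta)$, the first expectation reduces to $\|\bw\|^2\E[\rho^2]/4$; applying the product-to-sum identity $\cos\phi\cos(\phi-\theta)=\tfrac12\bigl(\cos\theta+\cos(2\phi-\theta)\bigr)$ and integrating over the arc $\phi\in(\theta-\pi/2,\pi/2)$ on which both inner products are positive gives the second expectation equal to $\|\bw\|\E[\rho^2]\bigl(\sin\theta+(\pi-\theta)\cos\theta\bigr)/(4\pi)$. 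Substituting yields
\[
\inner{\bw,\nabla F(\bw)}\;=\;\frac{\|\bw\|\,\E[\rho^2]}{4}\left(\|\bw\|-\frac{\sin\theta+(\pi-\theta)\cos\theta}{\pi}\right),
\]
so $\frac{\partial}{\partial t}\|\bw(t)\|^2\geq 0$ is equivalent to the sharp condition $\|\bw\|\leq\bigl(\sin\theta+(\pi-\theta)\cos\theta\bigr)/\pi$.

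The final step --- and what I expect to be the main obstacle --- is to deduce this sharp threshold from the hypothesis of the lemma, i.e.\ to verify the elementary trigonometric inequality
\[
\max\!\left\{\tfrac{\sin\theta+\cos\theta}{2},\;\tfrac{\sin\theta(1+\cos\theta)}{2}\right\}\;\leq\;\frac{\sin\theta+(\pi-\theta)\cos\theta}{\pi}
\]
throughout the relevant range of $\theta\in(0,\pi)$. My plan is to handle this by a case split on the sign of $\cos\theta$: for $\theta\leq\pi/2$ the first candidate bound is active, and the comparison can be made by absorbing $\cos\theta$ into the $(\pi-\theta)\cos\theta$ term using $\pi-\theta\geq\pi/2$; for $\theta>\pi/2$ the first bound may be negative so the second is active, and here I would rewrite $\sin\theta(1+\cos\theta)/2=2\sin(\theta/2)\cos^3(\theta/2)$ so as to align its cubic vanishing rate as $\theta\to\pi$ with the corresponding third-order behavior of the right-hand side near $\theta=\pi$, and then compare the leading coefficients. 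Since the two candidate bounds inside the $\max$ meet the sharp threshold with differing slack in different regimes, carefully identifying the regime transition and establishing the inequality uniformly is where most of the work should lie.
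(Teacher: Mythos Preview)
Your derivation of the sharp threshold $\|\bw\|\leq\bigl(\sin\theta+(\pi-\theta)\cos\theta\bigr)/\pi$ is correct, and it agrees with the paper's own Gaussian-case computation in \lemref{lem:normincrease}. The genuine gap is in your final step: the trigonometric inequality you set out to verify is \emph{false}. At $\theta=\pi/2$ the left-hand side equals $\max\{1/2,\,1/2\}=1/2$, whereas the right-hand side equals $1/\pi\approx 0.318$. No case split will rescue this; the lemma's hypothesis simply does not imply your sharp condition, so the plan as written cannot close.

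The paper takes a different route. After the same reduction to the two-dimensional unit circle, it parametrizes by $y_1$ and evaluates an integral $\int_0^1(\cdots)\,dy_1$ directly, splitting on the sign of $\cos\theta$; the two expressions inside the $\max$ are exactly what emerge from those two cases. However, the uniform distribution on the circle is uniform in the \emph{angle}, not in $y_1$; the change of variables $y_1=\cos\phi$ introduces the Jacobian $1/\sqrt{1-y_1^2}$, which the paper's integral omits. Your polar-coordinate calculation handles this correctly, and the discrepancy between your threshold and the paper's is precisely this missing weight (the extra mass near $y_1\to 1$, where the integrand behaves like $\cos\theta-\|\bw\|<0$, pushes the true threshold below the paper's). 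In other words, your approach is sound but exposes that the stated thresholds---and the paper's proof producing them---appear to be in error; the correct condition is the one you derived.
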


The lemma can be shown to imply that as long as $\theta$ remains bounded away from $\pi$, then $\norm{\bw(t)}^2$ cannot decrease below some positive number (as its derivative is positive close enough to zero, and $\norm{\bw(t)}^2$ is a continuous function of $t$). The proof idea of both lemmas is based on a technical calculation, where we project the spherically symmetric distribution on the $2$-dimensional subspace spanned by $\bw$ and $\bv$. 

Using the lemmas above, we can get the following convergence guarantee:

\begin{theorem}\label{thm: converge w.h.p}
Assume we initialize $\bw(0)$ such that $0<\norm{\bw(0)}\leq 2$, $\theta(\bw(0),\bv) \leq \pi - \epsilon$ for some $\epsilon >0$ and that Assumption \ref{assump:D and sigma}(1) holds. Then running gradient flow, we have for all $t\geq 0$
\[
\norm{\bw(t) - \bv}^2 \leq \norm{\bw(0)-\bv}\exp(-\lambda t)
\]
where $\lambda = \frac{\alpha^4\beta}{8\sqrt{2}}\sin^3\left(\frac{\epsilon}{8}\right)$.
\end{theorem}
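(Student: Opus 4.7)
The plan is to exhibit a forward-invariant region in which \thmref{thm:innerprod} keeps applying, and then integrate the resulting differential inequality on $\norm{\bw(t)-\bv}^2$. Since under gradient flow
\[
\tfrac{d}{dt}\norm{\bw(t)-\bv}^2 \;=\; -2\inner{\nabla F(\bw(t)),\bw(t)-\bv},
\]
one application of \thmref{thm:innerprod} combined with Gr\"onwall's inequality will immediately give the exponential rate; the entire work is to show that the hypotheses of \thmref{thm:innerprod} (a norm bound on $\bw(t)$ and an angle bound $\theta(\bw(t),\bv)\leq \pi-\delta$ with $\delta$ of order $\epsilon$) persist along the trajectory.

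I would establish two invariants for all $t\geq 0$: (a) $\theta(\bw(t),\bv)\leq \pi-\epsilon$, and (b) $\bw(t)\neq \mathbf{0}$. Invariant (a) is immediate from \lemref{lem:angle decrease}, provided (b) holds on $[0,t]$, since the lemma says the angle is non-increasing away from the origin. Invariant (b) is the tricky one: under (a), the quantity
\[
\tau := \max\!\left\{\tfrac{\sin(\pi-\epsilon)+\cos(\pi-\epsilon)}{2},\;\tfrac{\sin(\pi-\epsilon)(1+\cos(\pi-\epsilon))}{2}\right\}
\]
is strictly positive, and \lemref{lem:norm gets larger} forces $\tfrac{d}{dt}\norm{\bw(t)}^2\geq 0$ whenever $\norm{\bw(t)}\leq \tau$. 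A first-exit-time argument, bootstrapped together with (a), then rules out $\bw(t)\to\mathbf{0}$ and produces a uniform lower bound $\norm{\bw(t)}\geq \tau/2$ for all $t\geq 0$. For the upper bound on $\norm{\bw(t)}$ required by \thmref{thm:innerprod}, I would observe that once the inner-product bound is in force, $\norm{\bw(t)-\bv}$ is non-increasing, so $\norm{\bw(t)}\leq \norm{\bv}+\norm{\bw(0)-\bv}\leq 4$; the constant $2$ in \thmref{thm:innerprod} is harmless here, since the underlying proof uses $\norm{\bw}$ only as a generic boundedness parameter.

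With both invariants in hand, \thmref{thm:innerprod} (with $\gamma=1$ for ReLU, since $\sigma'\equiv 1$ on $(0,\infty)$) yields $\inner{\nabla F(\bw(t)),\bw(t)-\bv}\geq \lambda\norm{\bw(t)-\bv}^2$ for the stated $\lambda$; the exponent $\sin^3(\epsilon/8)$ (rather than the naive $\sin^3(\epsilon/4)$) absorbs the small margin one loses in making invariant (a) strict. Gr\"onwall then gives the claimed exponential decay.

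The main obstacle is the coupled bootstrap in the second paragraph: \lemref{lem:angle decrease} requires $\bw(t)\neq\mathbf{0}$ while \lemref{lem:norm gets larger} requires an a priori angle bound, so neither invariant can be established in isolation. The cleanest way to close the loop is to let $T^\star$ be the supremum of times on $[0,T^\star)$ for which both (a) and (b) hold, assume $T^\star<\infty$ for contradiction, and use continuity of $\bw(\cdot)$, $\theta(\bw(\cdot),\bv)$ and $\norm{\bw(\cdot)}$ to show that neither invariant can fail first at $T^\star$, yielding $T^\star = \infty$.
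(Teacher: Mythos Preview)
Your plan matches the paper's proof: couple \lemref{lem:angle decrease} and \lemref{lem:norm gets larger} in a first-exit-time argument to maintain $\bw(t)\neq\mathbf{0}$ and $\theta(\bw(t),\bv)\leq\pi-\epsilon$ for all $t$, then apply \thmref{thm:innerprod} with $\gamma=1$ and integrate via Gr\"onwall.

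One point to tighten is your handling of the hypothesis $\norm{\bw}\leq 2$ in \thmref{thm:innerprod}. As written, the argument is circular: you invoke the inner-product bound to conclude $\norm{\bw(t)-\bv}$ is non-increasing and hence $\norm{\bw(t)}\leq 4$, but the inner-product bound is precisely what needed a norm hypothesis. The clean fix (which the paper also leaves implicit) is specific to ReLU: since $\sigma'(z)=1$ on all of $(0,\infty)$ and not merely on $(0,2\alpha)$, the only place in the proof of \thmref{thm:innerprod} where $\norm{\bw}\leq 2$ is used---namely to force $\bw^\top\bx\in(0,2\alpha)$ so that Assumption~\ref{assump:D and sigma}(2) gives $\sigma'(\bw^\top\bx)\geq\gamma$---becomes vacuous. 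Hence for ReLU the conclusion of \thmref{thm:innerprod} holds with $\gamma=1$ and no norm upper bound at all, and your a~posteriori $\norm{\bw(t)}\leq 4$ step is unnecessary. Separately, your reading of the $\sin^3(\epsilon/8)$ (versus the $\sin^3(\epsilon/4)$ one obtains directly from \thmref{thm:innerprod} with $\delta=\epsilon$) as ``absorbing margin from invariant (a)'' is not what is going on---invariant (a) loses no margin---but since $\sin^3(\epsilon/8)\leq\sin^3(\epsilon/4)$ the stated $\lambda$ is in any case a valid rate.
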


We now note that the assumption of the theorem holds with exponentially high probability under standard initialization schemes. For example, if we use a Gaussian initialization $\bw(0)\sim \mathcal{N}(0,\frac{1}{d}I)$, then by standard concentration of measure arguments, it holds  w.p $> 1-e^{-\Omega(d)}$ that $\theta(\bw(0),\bv)$ is at most (say) $\frac{3\pi}{4}$, and w.p $> 1-e^{-\Omega(d)}$ that $\norm{\bw(0)} \leq 2$. As a result, by \thmref{thm: converge w.h.p}, w.p $> 1-e^{-\Omega(d)}$ over the initialization we have $\norm{\bw(t)-\bv}^2 \leq \norm{\bw(0)-\bv}^2 e^{-\Omega(t)}$ for all $t$. The full proof of the theorem can be found in \appref{appen:proofs from spherically}.

\begin{remark}
If we further assume that the distribution is a standard Gaussian, then it is possible to prove \lemref{lem:angle decrease} and \lemref{lem:norm gets larger} in a much easier fashion. The reason is that specifically for a standard Gaussian distribution there is a closed-form expression (without the expectation) for the loss and the gradient, see \cite{brutzkus2017globally}, \cite{safran2017spurious}. We provide the relevant versions of the lemmas, as well as their proofs, in \subsecref{subsec:standard gaussian proofs}.
\end{remark}

\subsection{Non-monotonic Angle  Behavior}\label{subsec:failures of gaussian}

\begin{figure}[t]
{\includegraphics[width=3in]{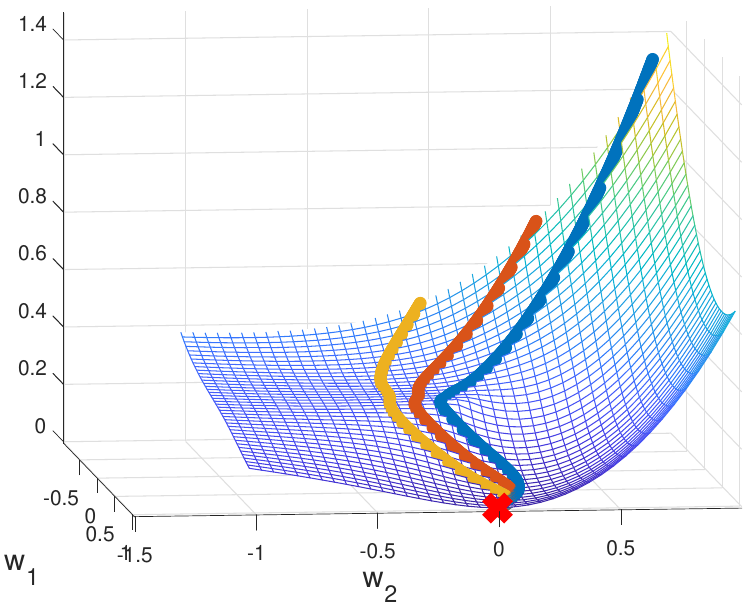}}
{\includegraphics[width = 3.5in]{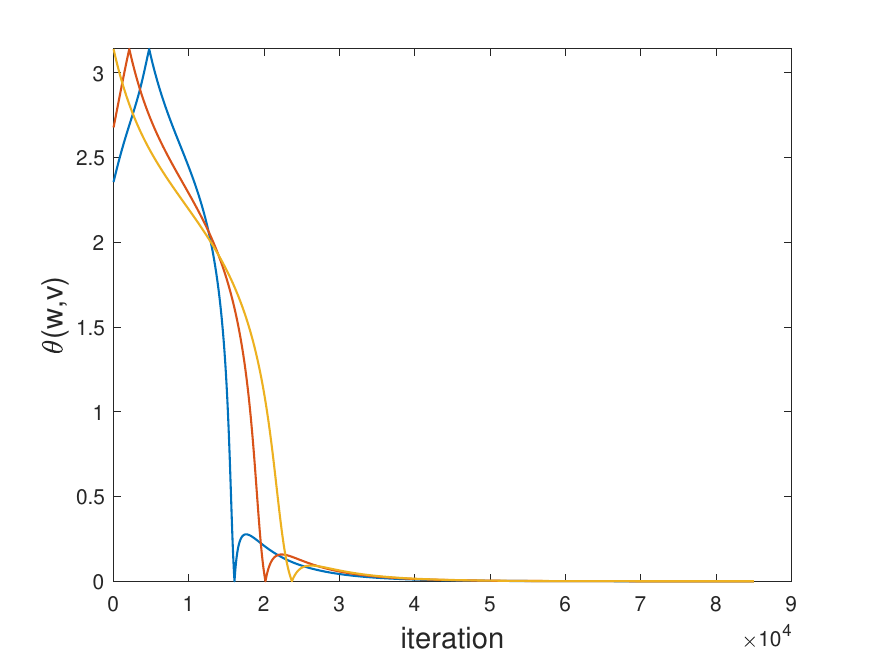}}
\caption{Gradient descent for $2$-dimensional data (best viewed in color). The left figure represents the trajectory of gradient descent over the loss surface. The red "x" marker represents the global minimum at $\bw=\bv=(1,0)$. The right figure shows the angle between $\bw$ and $\bv$ as a function of the number of iterations, where the angle ranges from $0$ to $\pi$. The plot colors in the right figure correspond to the trajectory colors in the left figure.}
\label{fig:gd}
\end{figure}

The results in the previous subsection crucially relied on the fact that at almost any point $\bw$, the angle $\theta(\bw,\bv)$ decreases. This type of analysis was also utilized in works on related settings (e.g., \citet{brutzkus2017globally}).

Based on this, it might be tempting to conjecture that this monotonically decreasing angle property (and as a result, high-probability guarantees) can be shown to hold more generally, not just for symmetrically spherical distributions. Perhaps surprisingly, we show empirically that this may not be the case, already when we discuss the simple setting of unit variance Gaussian with a \emph{non-zero} mean. We emphasize that this does not necessarily mean that gradient methods will not succeed, only that an analysis based on showing monotonic behavior of the relevant geometric quantity will not work in general. 

In particular, in Figure \ref{fig:gd} we report the result of running gradient descent (with constant step size $\eta=10^{-3}$) on our objective function $F$ in $\reals^2$, where the input distribution $\mathcal{D}$ is a unit-variance Gaussian with mean at $(0,1)$, and our target vector is $\bv = (1,0)$. We initialize at three different locations: $w_1 = (-1~1),~ w_2 = (-1,0.5),~w_3 = (-1,0)$. Although the algorithm eventually reaches the global minimum $\bw=\bv$, the angle between them is clearly non-monotonic, and actually is initially increasing rather than decreasing. Even worse, the angle appears to attain every value in $(0,\pi]$, so it appears that any analysis using angle-based ``safe regions'' is bound to fail. 

Overall, we conclude that proving a high-probability convergence guarantee for gradient methods appears to be an interesting open problem, already in the case of unit-variance, non-zero-mean Gaussian input distributions. We leave tackling this problem to future work.

\vskip 0.5cm
\textbf{Acknowledgements.} This research is supported in part by European Research Council (ERC) grant 754705. We thank Itay Safran for spotting a bug in the proof of \thmref{thm:innerprod}.

\setcitestyle{numbers}
\bibliographystyle{abbrvnat}
\bibliography{mybib}

\appendix

\section{Proofs from \secref{sec:assumptions}}\label{appen:proof from sec assumptions}

\begin{proof}[Proof of \thmref{thm:too strong activation assumption}]
We have that:

\begin{align*}
	\inner{\nabla F(\bw),\bw-\bv}&=\E_{\bx}\left[(\sigma(\bw^\top\bx)-\sigma(\bv^\top\bx))\sigma'(\bw^\top\bx)(\bw^\top\bx-\bv^\top\bx)\right]\\
	&\stackrel{(*)}{=} \E_{\bx}\left[\gamma\cdot (\sigma(\bw^\top\bx)-\sigma(\bv^\top\bx))(\bw^\top\bx-\bv^\top\bx)\right]\\
	&\stackrel{(**)}{=} \E_{\bx}\left[\gamma^2(\bw^\top\bx-\bv^\top\bx)^2\right]~=~
	\gamma^2(\bw-\bv)^\top \Sigma (\bw-\bv) \geq \gamma^2 \lambda \norm{\bw-\bv}^2
\end{align*}
where $(*)$ is by monotonicity of $\sigma$ (hence $(\sigma(\bw^\top\bx)-\sigma(\bv^\top\bx))(\bw^\top\bx-\bv^\top\bx)\geq 0$ always), and $(**)$ is by the assumption that $\sigma'(z)\geq \gamma$. Next, we bound the gradient $\nabla F (\bw)$:
 \begin{align*}
   \norm{\nabla F(\bw_t)}^2 & = \mathbb{E}_\bx\left[ \left( \sigma(\bw_t^\top\bx) - \sigma(\bv^\top\bx) \right)^2\cdot \sigma'(\bw^\top\bx)^2 \bx^\top\bx \right] \\
    & \leq c_2^4\mathbb{E}_\bx\left[ \left( \bw_t^\top\bx - \bv^\top\bx \right)^2\cdot \bx^\top\bx \right] \\
    & \leq c_2^4 \norm{\bw_t-\bv}^2\mathbb{E}_\bx\left[ \norm{\bx}^2\cdot \bx^\top\bx \right] \leq c_1^2c_2^4 \norm{\bw_t-\bv}^2.
\end{align*}

At iteration $t+1$ we have that:
\begin{align*}
    \norm{\bw_{t+1} - \bv}^2 &= \norm{\bw_t - \eta \nabla F (\bw_t) - \bv}^2 \\
    & = \norm{\bw_t - \bv}^2 - 2\eta \inner{\nabla F (\bw_t), \bw_t - \bv} + \eta^2 \norm{\nabla F(\bw_t)}^2 \\
    &\leq \norm{\bw_t - \bv}^2 - 2\gamma^2\lambda\eta \norm{\bw_t - \bv}^2 + \eta^2 c_1^2c_2^4 \norm{\bw_t - \bv}^2 \\
    & \leq \norm{\bw_t - \bv}^2 \left( 1 - \gamma^2\lambda\eta \right).
\end{align*}
Using induction over the above proves the lemma.

\end{proof}

\section{Proofs from \secref{sec:mild assumptions gradient}}\label{appen:proofs from sec mild}
We will first need the following lemma:
\begin{lemma}\label{lem:pie_slice}
	Fix some $\alpha\geq 0$, and let $\ba,\bb$ be two vectors in $\reals^2$ such that $\theta(\ba,\bb)\leq \pi-\delta$ for some $\delta\in (0,\pi]$. Then
	\[
	\inf_{\bu:\norm{\bu}=1}\int 
	\mathbbm{1}_{\ba^\top\by>0}\mathbbm{1}_{\bb^\top\by>0}\mathbbm{1}_{\norm{\by}\leq 
		\alpha}(\bu^\top \by)^2d\by ~\geq~ \frac{\alpha^4}{8\sqrt{2}}\sin^3\left(\frac{\delta}{4}\right)~.
	\]
\end{lemma}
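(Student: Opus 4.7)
The plan is to use polar coordinates to reduce the problem to a one-dimensional trigonometric integral, then to lower bound the resulting expression by elementary inequalities.

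First I would observe that the set $\{\by \in \reals^2 : \ba^\top \by > 0,\ \bb^\top \by > 0\}$ is an angular wedge with vertex at the origin; its angular width equals $\phi := \pi - \theta(\ba,\bb)$, which by assumption satisfies $\phi \geq \delta$ (and of course $\phi \leq \pi$). I would then rotate coordinates so the wedge is symmetric about the positive $x$-axis, i.e., in polar coordinates $\by = (r\cos\theta, r\sin\theta)$ the wedge becomes $\{(r,\theta): r \geq 0,\ \theta \in [-\phi/2,\phi/2]\}$, and write the unit vector $\bu$ as $(\cos\psi,\sin\psi)$ so that $\bu^\top\by = r\cos(\theta-\psi)$.

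Next I would separate variables and compute:
\[
\int \mathbbm{1}_{\ba^\top\by>0}\mathbbm{1}_{\bb^\top\by>0}\mathbbm{1}_{\|\by\|\leq\alpha}(\bu^\top\by)^2\,d\by = \int_0^\alpha r^3\,dr \cdot \int_{-\phi/2}^{\phi/2}\cos^2(\theta-\psi)\,d\theta = \frac{\alpha^4}{8}\bigl(\phi + \sin\phi\cos(2\psi)\bigr),
\]
where the last equality follows from $\cos^2 x = (1+\cos 2x)/2$ together with a sum-to-product identity. Since $\sin\phi \geq 0$ on $[0,\pi]$, the infimum over unit $\bu$ (equivalently over $\psi$) is exactly $\frac{\alpha^4}{8}(\phi - \sin\phi)$, attained when $\bu$ is perpendicular to the bisector of the wedge.

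It then remains to lower bound $\phi - \sin\phi$ in terms of $\sin^3(\delta/4)$. Writing $\phi - \sin\phi = \int_0^\phi (1-\cos t)\,dt = \int_0^\phi 2\sin^2(t/2)\,dt$ and restricting the integration to $t \in [\phi/2,\phi]$, the monotonicity of $\sin$ on $[0,\pi/2]$ (applicable since $\phi/2 \leq \pi/2$) gives $\phi - \sin\phi \geq \phi\sin^2(\phi/4)$. Combining with $\phi \geq \delta$ and monotonicity of $\sin$ on $[0,\pi/4]$ yields $\phi - \sin\phi \geq \delta\sin^2(\delta/4)$; and finally the trivial bound $\sin(\delta/4) \leq \delta/4 \leq \sqrt{2}\,\delta$ implies $\delta\sin^2(\delta/4) \geq \sin^3(\delta/4)/\sqrt{2}$, completing the proof.

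I do not anticipate any real obstacle with this plan — it is essentially a careful trigonometric calculation. The only mild subtlety is that the lower bound on $\phi - \sin\phi$ must hold uniformly for $\phi \in [\delta,\pi]$, which is handled by the monotonicity arguments above; the somewhat awkward constant $\sin^3(\delta/4)/\sqrt{2}$ in the lemma statement is weaker than what the calculation naturally yields, so there is plenty of slack.
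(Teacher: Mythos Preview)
Your proposal is correct and actually cleaner than the paper's argument. You compute the integral exactly in polar coordinates, obtaining the closed form $\frac{\alpha^4}{8}(\phi-\sin\phi)$ for the infimum over $\bu$, and then lower bound $\phi-\sin\phi$ by elementary calculus. The paper instead works in Cartesian coordinates: after the same rotation it uses the axial symmetry of the wedge to kill the cross term and reduce to $\min\{\int y_1^2\,d\by,\int y_2^2\,d\by\}$, then further lower bounds this by $\int \min\{y_1^2,y_2^2\}\,d\by$ over two explicit rectangles inscribed in the wedge (see their Figure~\ref{fig:P}), and finally estimates that integral crudely. Your route has the advantage of yielding the sharp value of the infimum before any lower bounding (so one sees exactly how much slack the stated constant has; indeed your chain gives $\delta\sin^2(\delta/4)\geq 4\sin^3(\delta/4)$, a factor of $4\sqrt{2}$ better than needed). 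The paper's route avoids the trigonometric identities but is less precise and a bit longer. Both reach the stated bound without difficulty.
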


\begin{proof} 
	It is enough to lower bound
	\[
	\inf_{\bu}~~\inf_{\bb:\theta(\ba,\bb)\leq \pi-\delta}\int 
	\mathbbm{1}_{\ba^\top\by>0,\bb^\top\by>0,\norm{\by}\leq 
		\alpha}(\bar{\bu}^\top \by)^2d\by~.
	\]
	The inner infimum is attained at some $\bb$ such that 
	$\theta(\ba,\bb)=\pi-\delta$. This is because $\bar{\bu}^\top\by$ does not depend on $\ba$ and $\bb$, and the volume for which the indicator function inside the integral is non-zero is smallest when the angle $\theta(\ba,\bb)$ is largest. Setting this and switching 
	the order of the infima, we get
	\[
	\inf_{\bb:\theta(\ba,\bb)=-\pi+\delta}~~\inf_{\bu}\int 
	\mathbbm{1}_{\ba^\top\by>0}\mathbbm{1}_{\bb^\top\by>0}\mathbbm{1}_{\norm{\by}\leq 
		\alpha}(\bar{\bu}^\top \by)^2d\by~.
	\]
	When $\theta(\ba,\bb)=-\pi+\delta$, we note that the set $\{\by\in\reals^2:\ba^\top\by>0,\bb^\top\by>0,\norm{\by}\leq \alpha\}$ is simply a ``pie slice'' of radial width $\delta$ out of a ball of radius $\alpha$. Since the expression is invariant to rotating the coordinates, we will consider without loss of generality the set $P=\{\by:\theta(\by,\be_1)\leq \delta/2,\norm{\by}\leq \alpha\}$, and the expression above reduces to
	\begin{align}
	\inf_{\bu}\int_{\by\in P}(\bar{\bu}^\top\by)^2d\by~&=~\inf_{\bu:\norm{\bu}=1}\int_{\by\in P}
	\left((u_1y_1)^2+(u_2y_2)^2+2u_1u_2y_1y_2\right)d\by\notag\\
	~&\stackrel{(*)}{=}~
	\inf_{\bu:\norm{\bu}=1}\int_{\by\in P}
	\left((u_1y_1)^2+(u_2y_2)^2\right)d\by\notag\\
	~&=~
	\inf_{u_1,u_2:u_1^2+u2^2=1}u_1^2\int_{\by\in P}y_1^2d\by + u_2^2\int_{\by\in P}y_2^2d\by~=~
	\min\left\{\int_{\by\in P}y_1^2d\by~,~\int_{\by\in P}y_2^2d\by\right\}\notag\\
	&~\geq~\int_{\by\in P}\min\{y_1^2,y_2^2\}d\by~,\label{eq:minyy}
	\end{align}
	where $(*)$ is from the fact that $P$ is symmetric around the $x$-axis (namely, $(y_1,y_2)\in P$ if and only if $(y_1,-y_2)\in P$). 

	\begin{figure}[t]
	    \centering
        \includegraphics[scale=0.8]{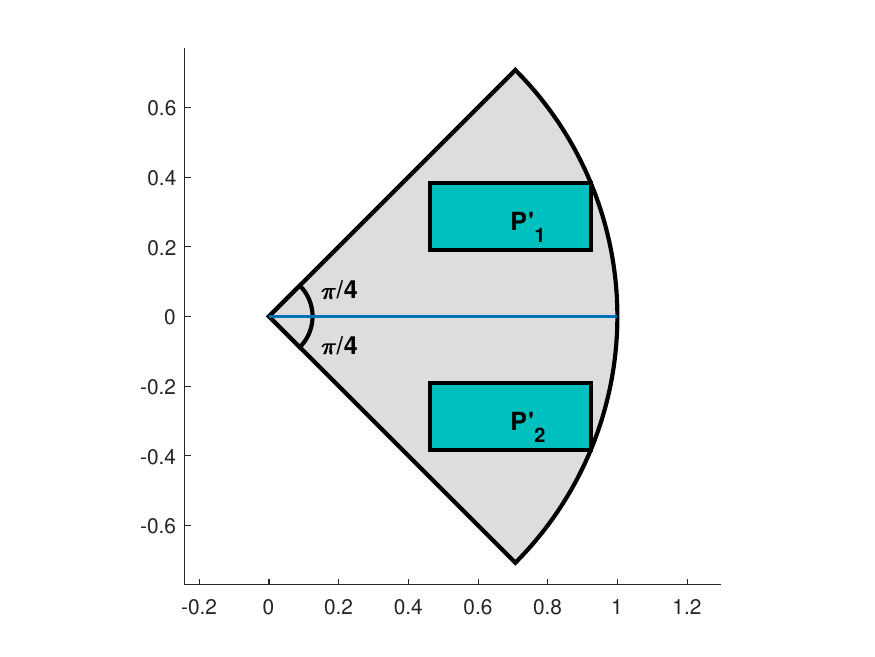}
	    \caption{An illustration of the sets $P,P'_1, P'_2$ for the case of $\alpha =1,~ \delta = \frac{\pi}{2}$. The set $P$, colored in gray, is a "pie slice" and the rectangles $P'_1, P'_2$ are contained in $P$.}
	    \label{fig:P}
	\end{figure}

	We now note that the set $P$ contains the two (disjoint and equally-sized) rectangular sets
	\[
	P'_1:=\left[\frac{\alpha}{2}\cos\left(\frac{\delta}{4}\right),\alpha\cos\left(\frac{\delta}{4}\right)\right]\times
	\left[\frac{\alpha}{2}\sin\left(\frac{\delta}{4}\right),\alpha\sin\left(\frac{\delta}{4}\right)\right]
	\]
	and
	\[
	P'_2:=\left[\frac{\alpha}{2}\cos\left(\frac{\delta}{4}\right),\alpha\cos\left(\frac{\delta}{4}\right)\right]\times
	\left[-\alpha\sin\left(\frac{\delta}{4}\right),-\frac{\alpha}{2}\sin\left(\frac{\delta}{4}\right)\right]
	\]	
	(see Figure \ref{fig:P} for an illustration).
	Therefore, we can lower bound \eqref{eq:minyy} by
	\begin{align*}
	\int_{\by\in P'_1\cup P'_2} \min\{y_1^2,y_2^2\}d\by
	~&=~\left(\min_{\by\in P'_1\cup P'_2}\min\{y_1^2,y_2^2\}\right)\int_{\by\in P'_1\cup P'_2}1 d\by\\
	&=~\frac{\alpha^2}{4}\min\left\{\cos^2\left(\frac{\delta}{4}\right),\sin^2\left(\frac{\delta}{4}\right)\right\}\cdot \int_{\by\in P'_1\cup P'_2}1d\by\\
	&=~\frac{\alpha^2}{4}\sin^2\left(\frac{\delta}{4}\right)\cdot \int_{\by\in P'_1\cup P'_2}1d\by~,
	\end{align*}
	where we used the fact that $\frac{\delta}{4}\in \left[0,\frac{\pi}{4}\right]$ and therefore $\cos^2(\delta/4)\geq \sin^2(\delta/4)$. 
	The integral is simply the volume of $P'_1\cup P'_2$, and since $P'_1$ and $P'_2$ are disjoint and equally sized rectanges, this equals twice the volume of $P'_1$, namely
	$
	2\cdot\frac{\alpha}{2}\cos\left(\frac{\delta}{4}\right)\cdot \frac{\alpha}{2}\sin\left(\frac{\delta}{4}\right)
	$. Plugging into the above, we get
	\[
	\frac{\alpha^2}{4}\sin^2\left(\frac{\delta}{4}\right)\cdot \frac{\alpha^2}{2}\cos\left(\frac{\delta}{4}\right)\sin\left(\frac{\delta}{4}\right)~=~
	\frac{\alpha^4}{8}\sin^3\left(\frac{\delta}{4}\right)\cos\left(\frac{\delta}{4}\right)~\geq \frac{\alpha^4}{8\sqrt{2}}\sin^3\left(\frac{\delta}{4}\right)~,
	\]
	where again we used the fact that $\delta/4 \in [0,\pi/4]$. 

\end{proof}

We now turn to prove the theorem:
\begin{proof}[Proof of \thmref{thm:innerprod}]
    
We have:
\begin{equation}\label{eq:product}
\inner{\nabla F(\bw),\bw-\bv} = 
\E_{\bx}\left[\left(\sigma(\bw^\top\bx)-\sigma(\bv^\top\bx)\right)
\cdot\sigma'(\bw^\top\bx)\cdot(\bw^\top\bx-\bv^\top\bx)\right]~.
\end{equation}

Let $P$ be the orthogonal projection on the plane spanned by $\bw$ and $\bv$.
We note that since $\sigma$ is monotonically non-decreasing, then for any $\bx$, 
$\sigma'(\bw^\top\bx)\geq 0$ and $(\sigma(\bw^\top 
\bx)-\sigma(\bv^\top))(\bw^\top\bx-\bv^\top\bx)\geq 0$. As 
a result, we can lower bound \eqref{eq:product} by
\begin{align*}
&\E_{\bx}\left[\mathbbm{1}_{\bw^\top\bx>0}\mathbbm{1}_{\bv^\top\bx>0}
\left(\sigma(\bw^\top\bx)-\sigma(\bv^\top\bx)\right)
\cdot\sigma'(\bw^\top\bx)\cdot(\bw^\top\bx-\bv^\top\bx) \right]\\
&\geq
\E_{\bx}\left[\mathbbm{1}_{\norm{P\bx} \leq \alpha}\mathbbm{1}_{\bw^\top\bx>0}\mathbbm{1}_{\bv^\top\bx>0}
\left(\sigma(\bw^\top\bx)-\sigma(\bv^\top\bx)\right)
\cdot\gamma\cdot(\bw^\top\bx-\bv^\top\bx) \right]\\
&=\gamma\cdot\E_{\bx}\left[\mathbbm{1}_{\norm{P\bx} \leq \alpha}\mathbbm{1}_{\bw^\top\bx>0}\mathbbm{1}_{\bv^\top\bx>0}
\left(\sigma(\bw^\top\bx)-\sigma(\bv^\top\bx)\right)
(\bw^\top\bx-\bv^\top\bx) \right]~,
\end{align*}
where we used that $\norm{\bw} \leq 2$, hence for $\norm{\bx}\leq \alpha$ 
(and also for $\norm{P\bx}\leq \alpha$, since $P$ is an orthogonal projection) 
we have  $\inner{\bx,\bw} \leq 2\alpha$ which by our assumption means that $\sigma'(\inner{\bw,\bx}) > \gamma$. By the assumption that $\sigma'(z)\geq \gamma$ for any $0<z<2\alpha$, it follows 
that $(\sigma(z')-\sigma(z))\cdot(z'-z)\geq \gamma(z'-z)^2$ for any $0< z,z'<2\alpha$ 
As a result, the displayed equation above is at least
\begin{align*}
&\gamma^2\cdot \E_{\bx}\left[\mathbbm{1}_{\norm{P\bx} \leq \alpha}\mathbbm{1}_{\bw^\top>0}\mathbbm{1}_{\bv^\top\bx>0}
(\bw^\top \bx-\bv^\top\bx)^2\right]\\
&=
\gamma^2\norm{\bw-\bv}^2\cdot 
\E_{\bx}\left[\mathbbm{1}_{\norm{P\bx} \leq \alpha}\mathbbm{1}_{\bw^\top\bx>0}\mathbbm{1}_{\bv^\top\bx>0}
((\overline{\bw-\bv})^\top\bx)^2\right]\\
&\geq
\gamma^2\norm{\bw-\bv}^2\cdot \inf_{\bu\in \text{span}\{\bw,\bv\},\norm{\bu}=1} 
\E_{\bx}\left[\mathbbm{1}_{\norm{P\bx} \leq \alpha}\mathbbm{1}_{\bw^\top\bx>0}\mathbbm{1}_{\bv^\top\bx>0}(\bu^\top\bx)^2\right]
\end{align*}
Since the expression inside the expectation above depends just on inner 
products of $\bx$ with $\bw,\bv$, we can consider the marginal distribution 
$\Dcal_{\bw,\bv}$ of $\bx$ on the $2$-dimensional subspace spanned by $\bw,\bv$ 
(with density function 
$p_{\bw,\bv}$), and letting $\hat{\bw},\hat{\bv}$ denote the projections of 
$\bw,\bv$ on that subspace, write the above as
\begin{align*}
& \gamma^2\norm{\bw-\bv}^2\cdot \inf_{\bu\in \reals^2,\norm{\bu}=1}
\E_{\by\sim 
	\Dcal_{\bw,\bv}}\left[\mathbbm{1}_{\hat{\bw}^\top\by>0}\mathbbm{1}_{\hat{\bv}^\top\by>0}\mathbbm{1}_{\norm{\by}\leq
	\alpha}(\bu^\top\by)^2\right]\\
&=
\gamma^2\norm{\bw-\bv}^2\cdot \inf_{\bu\in \reals^2,\norm{\bu}=1}
\int
\mathbbm{1}_{\hat{\bw}^\top\by>0}\mathbbm{1}_{\hat{\bv}^\top\by>0}\mathbbm{1}_{\norm{\by}\leq
	\alpha}(\bu^\top\by)^2p_{\bw,\bv}(\by)d\by\\
&\geq \beta\gamma^2\norm{\bw-\bv}^2\cdot \inf_{\bu\in \reals^2,\norm{\bu}=1}
\int
\mathbbm{1}_{\hat{\bw}^\top\by>0}\mathbbm{1}_{\hat{\bv}^\top\by>0}\mathbbm{1}_{\norm{\by}\leq
	\alpha}(\bu^\top\by)^2d\by~,
\end{align*}
where the last step is by our assumptions (note that if $\bw=\bv$, the theorem 
statement is trivially true by \eqref{eq:product} which implies that the inner 
product is non-negative). The theorem now follows from 
\lemref{lem:pie_slice}.
\end{proof}

\section{Proofs from \secref{sec:convergence with constant probability}}\label{appen:proofs from constant probability}

\begin{proof}[Proof of \lemref{lemma:constant probability initialization}]
    Fix some $\epsilon > 0$ to be determined later. We have that:
    \begin{align*}
        & \mathcal{P}\left(\|\bw-\bv\|^2 \leq 1-\epsilon\right) = \mathcal{P}\left(\|\bw\|^2 - 2\inner{\bw,\bv} \leq -\epsilon\right) \\
        & = \mathcal{P}\left(\inner{\bw,\bv} \geq \frac{\|\bw\|^2 + \epsilon}{2}\right).
    \end{align*}
    Since the distribution of $\bw$ is spherically symmetric, we can assume w.l.o.g that $\bv=(1,0)$, so that $\inner{\bw,\bv}=w_1$. Thus, the above probability can be written as:
    \begin{align}\label{eq:probability bound}
        &\mathcal{P}\left(\inner{\bw,\bv} \geq \frac{\|\bw\|^2 + \epsilon}{2}\right) = \mathcal{P}\left(w_1 \geq \frac{\norm{\bw}^2 + \epsilon}{2}\right) \nonumber\\
        ~\geq~ &\mathcal{P}\left(w_1 \geq 2\mathbb{E}\left[\norm{\bw}^2\right]\right) - \mathcal{P}\left(\frac{\norm{\bw}^2 + \epsilon}{2} \geq 2\mathbb{E}\left[\norm{\bw}^2\right]\right)
    \end{align}
    where we used the fact that for every two random variable $A,B$ and constant $c$ we have that $\mathcal{P}(A \geq B) \geq \mathcal{P}(A\geq c) - \mathcal{P}(B \geq c)$. For the first term of \eqref{eq:probability bound}, we know that $\mathbb{E}\left[\norm{\bw}^2\right] = \tau^2 d$, hence:
    \begin{align*}
        \mathcal{P}\left(w_1 \geq 2\mathbb{E}\left[\norm{\bw}^2\right]\right) = \mathcal{P}\left(w_1 \geq 2 \tau^2 d \right) = \frac{1}{2} - \frac{1}{2} \text{erf} \left( \sqrt{2}\tau d \right)
    \end{align*}
    where $\text{erf}$ is the error function. For any $0 < z < 1$ it can be easily verified that $\text{erf}(z) \geq \frac{z}{3}$.
    Combining this and using the assumption that $\tau \leq \frac{1}{d\sqrt{2}}$ we can bound :
    \[
     \mathcal{P}\left(w_1 \geq 2\mathbb{E}\left[\norm{\bw}^2\right]\right) \geq \frac{1}{2} - \frac{1}{3\sqrt{2}}\tau d \geq \frac{1}{2} - \frac{1}{4}\tau d
    \]
    For the second term of \eqref{eq:probability bound} take $\epsilon = 2\tau^2 d$ to get:
    \begin{align*}
        \mathcal{P}\left(\frac{\norm{\bw}^2 + \epsilon}{2} \geq 2\mathbb{E}\left[\norm{\bw}^2\right]\right) &= \mathcal{P}\left( \norm{\bw}^2 \geq 4\tau^2 d - \epsilon \right) \\
        &  \leq \mathcal{P}\left( \norm{\bw}^2 \geq 2\tau^2 d\right) \leq \left(2e^{-1}\right)^{d/2} \leq 1.2^{-d}
    \end{align*}
    where in the second inequality we used a standard tail bound on Chi-squared distributions. Combining the above with \eqref{eq:probability bound} we get that:
    \begin{equation*}
        \mathcal{P}\left(\|\bw-\bv\|^2 \leq 1-2\tau^2d\right) \geq \frac{1}{2} - \frac{1}{4}\tau d - 1.2^{-d}.
    \end{equation*}
\end{proof}
\subsection{Gradient Flow}
\begin{proof}[Proof of \thmref{thm:constant probability convergence}(1)]
    First we show that at every time $t_0$ for which $\norm{\bw(t_0) -\bv} < 1$ the conditions of \thmref{thm:innerprod} hold. We have that $\norm{\bw(t_0)} \leq \norm{\bw(t_0) -\bv} + \norm{\bv} < 2$, hence $\norm{\bw(t_0)} <2$. Next $\norm{\bw(t_0)-\bv}^2 < 1$ and $\norm{\bv}^2=1$ hence $\inner{\bw(t_0),\bv} \geq \frac{1}{2}\norm{\bw(t_0)}^2 >0$ which means that $\theta(\bw(t_0),\bv) < \frac{\pi}{2}$.
    This shows that we can use \thmref{thm:innerprod} at time $t=t_0$ to get that:
    \begin{align}\label{eq:grad norm w v}
        \frac{\partial}{\partial t}\|\bw(t)-\bv\|^2 &= 2\inner{\bw(t)-\bv,\frac{\partial}{\partial t} \bw(t)} = -2\inner{\bw(t)-\bv,\nabla F(\bw(t))} \leq 0.
    \end{align}
    By the assumptions of the theorem, the above holds for time $t_0=0$. Assume on the way of contradiction that for some time $t>0$ we have that $\norm{\bw(t)-\bv} \geq 1$, and let $t_1$ be the first time that this happens. Then for every $t_0<t<t_1$ we have that $\norm{\bw(t)-\bv} <1$. But because $\norm{\bw(t_1)-\bv} \geq 1$ we have that for some time $t_0<t<t_1$: $\frac{\partial}{\partial t}\norm{\bw(t)-\bv} > 0$, a contradiction to \eqref{eq:grad norm w v}. Hence for every $t\geq 0$ we have that $\norm{\bw(t)-\bv} <1$ and the conditions of \thmref{thm:innerprod} hold.
    
    Using \thmref{thm:innerprod} again we get that for every $t>0$:
    \begin{equation*}
        \inner{\nabla F(\bw(t)), \bw(t) - \bv} \geq \frac{\alpha^4\beta\gamma^2}{8\sqrt{2}}\sin\left(\frac{\pi}{8}\right)^3\|\bw(t)-\bv\|^2 \geq \frac{\alpha^4\beta\gamma^2}{210}|\bw(t)-\bv\|^2.
    \end{equation*} 
    Set $\lambda = \frac{\alpha^4\beta\gamma^2}{210}$, in total we have that:
    \begin{equation*}
        \frac{\partial}{\partial t} \|\bw(t) - \bv\|^2 = -2\inner{\nabla F(\bw(t)),\bw(t)-\bv} \leq -\lambda\|\bw(t)-\bv\|^2~.
    \end{equation*}
    Using Gr\"{o}nwall's inequality, this proves that for every $t>0$ we get:
    \begin{equation*}
        \|\bw(t) - \bv\|^2 \leq \|\bw(0)-\bv\|^2\exp(-\lambda t).
    \end{equation*}

\end{proof}

\subsection{Gradient Descent}

\begin{proof}[Proof of \thmref{thm:constant probability convergence}(2)]
    Assume that $\norm{\bw_t - \bv}^2 < 1$ for some $t\geq 0$, then we have that $\theta(\bw_t,\bv)\leq \frac{\pi}{2}$. Thus, we can use \thmref{thm:innerprod} with $\delta=\frac{\pi}{2}$ to get that:
    \begin{align*}
       \|\bw_{t+1} - \bv\|^2 & = \|\bw_{t} - \eta \nabla F(\bw_t) - \bv\|^2 \\
        & = \norm{\bw_t - \bv}^2 - 2\eta \inner{\nabla F(\bw_t),\bw_t - \bv} +\eta^2 \norm{\nabla F(\bw_t)}^2 \\
        & \leq \norm{\bw_t - \bv}^2(1 - \eta\lambda) + \eta^2 \norm{\nabla F(\bw_t)}^2.
    \end{align*}
    Now to bound the second term of the above expression recall the definition of $\nabla F(\bw_t)$ to get:
    \begin{align*}
       \norm{\nabla F(\bw_t)}^2 & = \mathbb{E}_\bx\left[ \left( \sigma(\bw_t^\top\bx) - \sigma(\bv^\top\bx) \right)^2\cdot \sigma'(\bw^\top\bx)^2 \bx^\top\bx \right] \\
        & \leq c_2^4\mathbb{E}_\bx\left[ \left( \bw_t^\top\bx - \bv^\top\bx \right)^2\cdot \bx^\top\bx \right] \\
        & \leq c_2^4 \norm{\bw_t-\bv}^2\mathbb{E}_\bx\left[ \norm{\bx}^2\cdot \bx^\top\bx \right] \leq c_1^2c_2^4 \norm{\bw_t-\bv}^2
    \end{align*}
    where in the first inequality we used that $\sigma$ is monotonic with bounded derivative, and in the second inequality we used Cauchy-Schwartz. Note that by our choice of $\eta$: 
      \begin{equation*}
          1-\eta\lambda + \eta^2c <1 - \frac{\eta\lambda}{2} < 1,
      \end{equation*} 
    this proves that:
    \begin{equation}\label{eq:gradient descent recursive}
         \|\bw_{t+1} - \bv\|^2 \leq (1-\eta\lambda + \eta^2c)\norm{\bw_t - \bv}^2 \leq \left(1-\frac{\eta\lambda}{2}\right)\norm{\bw_t - \bv}^2
    \end{equation}
    and in particular $\norm{\bw_{t+1}-\bv} < 1$. Now after $T$ iterations we can use \eqref{eq:gradient descent recursive} iteratively to get that:
      \begin{align*}
          \|\bw_{T} - \bv\|^2 &~\leq~ \left(1-\frac{\eta\lambda}{2}\right)\|\bw_{T-1} - \bv\|^2\\ & ~\leq~ ...~\leq~ \left(1-\frac{\eta\lambda}{2}\right)^{T}\|\bw_{0} - \bv\|^2~.
      \end{align*}
\end{proof}

\subsection{Stochastic Gradient Descent}

First, we prove a recursion relation similar to the one in the gradient descent step. Only here since each gradient step is stochastic we can only prove that the recursion relation holds in expectation over the example selected in each iteration.

\begin{lemma}\label{lemma:recursion norm}
    Suppose that $\|\bw_{t} - \bv\|^2\leq 1-\epsilon$. Then
    \begin{equation*}
       \mathbb{E}\left[\|\bw_{t+1} - \bv\|^2|\bw_t\right] \leq (1-2\eta\lambda+\eta^2c)\|\bw_{t} - \bv\|^2
    \end{equation*}
    where $c = c_1^2c_2^4$.
\end{lemma}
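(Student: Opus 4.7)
The plan is to mirror the gradient descent analysis from Theorem 4.4(2), but to take a conditional expectation over $\bx_t$ so that only the expected quantities need to satisfy the hypothesis of Theorem 3.1.

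First I would verify that the conditions of Theorem 3.1 hold at $\bw_t$. From $\|\bw_t-\bv\|^2 \leq 1-\epsilon < 1$ and $\|\bv\|=1$ we get $\|\bw_t\|\leq \|\bw_t-\bv\|+1 < 2$, and moreover $\inner{\bw_t,\bv} = \tfrac{1}{2}(\|\bw_t\|^2 + \|\bv\|^2 - \|\bw_t-\bv\|^2) > \tfrac{1}{2}\|\bw_t\|^2 \geq 0$, so $\theta(\bw_t,\bv) \leq \pi/2$. Thus Theorem 3.1 applies with $\delta = \pi/2$, yielding $\inner{\nabla F(\bw_t),\bw_t-\bv} \geq \lambda\|\bw_t-\bv\|^2$ for the $\lambda$ defined in the theorem statement.

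Next I would expand the squared distance using $\bw_{t+1} = \bw_t - \eta g_t$:
\[
\|\bw_{t+1}-\bv\|^2 = \|\bw_t-\bv\|^2 - 2\eta\inner{g_t,\bw_t-\bv} + \eta^2\|g_t\|^2.
\]
Taking conditional expectation on $\bw_t$ (over the fresh sample $\bx_t\sim\Dcal$), the cross term gives $-2\eta\inner{\nabla F(\bw_t),\bw_t-\bv}$ by the unbiasedness $\mathbb{E}[g_t \mid \bw_t]=\nabla F(\bw_t)$. By the previous step this is bounded above by $-2\eta\lambda\|\bw_t-\bv\|^2$.

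For the second moment term I would bound $\|g_t\|^2$ pointwise in $\bx_t$ using Assumption 4.2, exactly as in the proof of Theorem 4.4(2): since $|\sigma'|\leq c_2$ and $\sigma$ is $c_2$-Lipschitz,
\[
\|g_t\|^2 \leq c_2^4\,(\bw_t^\top\bx_t - \bv^\top\bx_t)^2\|\bx_t\|^2 \leq c_2^4\|\bx_t\|^4\|\bw_t-\bv\|^2 \leq c_1^2 c_2^4\|\bw_t-\bv\|^2,
\]
using Cauchy--Schwarz and $\|\bx_t\|^2 \leq c_1$ a.s. Taking expectations preserves this bound, so $\mathbb{E}[\|g_t\|^2 \mid \bw_t] \leq c\|\bw_t-\bv\|^2$ with $c=c_1^2c_2^4$. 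Combining the two bounds gives
\[
\mathbb{E}\!\left[\|\bw_{t+1}-\bv\|^2 \mid \bw_t\right] \leq (1 - 2\eta\lambda + \eta^2 c)\|\bw_t-\bv\|^2,
\]
which is the claimed recursion. There is no real obstacle here since all the ingredients are already available; the only subtlety is that we use the hypothesis $\|\bw_t-\bv\|^2 \leq 1-\epsilon < 1$ only to license the application of Theorem 3.1, and the factor-$2$ improvement over the gradient-descent proof (where we had $1-\eta\lambda$) simply reflects that we are not yet absorbing the $\eta^2 c$ term into $\eta\lambda$ at this stage — that absorption will be done later when the lemma is applied with a suitably small step size in the main SGD argument.
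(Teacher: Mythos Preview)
Your proposal is correct and follows essentially the same route as the paper: verify the angle and norm conditions so that \thmref{thm:innerprod} applies with $\delta=\pi/2$, expand $\|\bw_{t+1}-\bv\|^2$, take conditional expectation using $\mathbb{E}[g_t\mid\bw_t]=\nabla F(\bw_t)$ for the cross term, and bound the quadratic term by $c_1^2c_2^4\|\bw_t-\bv\|^2$. Your pointwise bound on $\|g_t\|^2$ is in fact slightly cleaner than the paper's presentation, which writes $\|\nabla F(\bw_t)\|^2$ in place of $\mathbb{E}[\|g_t\|^2\mid\bw_t]$ (a harmless slip, since the subsequent computation is really a bound on the latter).
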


\begin{proof}
    We can use \thmref{thm:innerprod} with $\delta=\frac{\pi}{2}$ to get that
    \begin{align*}
        \mathbb{E}\left[\|\bw_{t+1} - \bv\|^2|\bw_t\right] & = \mathbb{E}\left[\|\bw_{t} - \eta g_t - \bv\|^2|\bw_t\right] \\
        & = \norm{\bw_t - \bv}^2 - 2\eta \mathbb{E}[\inner{g_t,\bw_t - \bv}|\bw_t] + \eta^2 \mathbb{E}[\norm{g_t}^2|\bw_t] \\
        & = \norm{\bw_t - \bv}^2 - 2\eta \inner{\nabla F(\bw_t),\bw_t - \bv} +\eta^2 \norm{\nabla F(\bw_t)}^2 \\
        & \leq \norm{\bw_t - \bv}^2(1 - 2\eta\lambda) + \eta^2 \norm{\nabla F(\bw_t)}^2~.
    \end{align*}
    Now to bound the second term recall the definition of $\nabla F(\bw_t)$ to get:
    \begin{align*}
       \norm{\nabla F(\bw_t)}^2 & = \mathbb{E}_\bx\left[ \left( \sigma(\bw_t^\top\bx) - \sigma(\bv^\top\bx) \right)^2\cdot \sigma'(\bw^\top\bx)^2 \bx^\top\bx \right] \\
        & \leq c_2^4\mathbb{E}_\bx\left[ \left( \bw_t^\top\bx - \bv^\top\bx \right)^2\cdot \bx^\top\bx \right] \\
        & \leq c_2^4 \norm{\bw_t-\bv}^2\mathbb{E}_\bx\left[ \norm{\bx}^2\cdot \bx^\top\bx \right] \leq c_1^2c_2^4 \norm{\bw_t-\bv}^2
    \end{align*}
    where in the first inequality we used that $\sigma$ is monotonic with bounded derivative, and in the second inequality we used Cauchy-Schwartz. This proves the required bound.
\end{proof}

The recursion relation above only works if $\bw_t$ is in a "safe zone", that is $\norm{\bw_t-\bv}^2 \leq 1-\epsilon$. Although in expectation the distance between $\bw_t$ and $\bv$ only decrease, taking a stochastic step may take $\bw_{t+1}$ outside of the safe zone. The following lemma shows that if $\eta$ is small enough, then taking at most $m = O(1/\eta)$ steps keeps $\bw_t$ in the "safe zone" w.h.p for every $t=1,\dots,m$. 

\begin{lemma}\label{lem:w.h.p m iterations w i close to v}
    Assume that $\norm{\bw_0 - \bv}^2 \leq 1-\epsilon$, and Let $\delta >0$. Then w.p $> 1-\delta$, if $\eta <\frac{\epsilon^2\lambda}{3c_1^2c_2^4\log\left(\frac{1}{\delta}\right)}$ and $m \leq \frac{1}{9\eta c_1c_2^2}$ then for every $i=1,\dots,m$ we have that $\norm{\bw_i - \bv}^2 \leq 1-\frac{\epsilon}{2}$.

\end{lemma}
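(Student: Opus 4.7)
The plan is to control the random process $Y_t := \|\bw_t - \bv\|^2$ by decomposing it into a (non-positive) drift term plus a martingale, and then bound the martingale part with a maximal concentration inequality. Write
\[
Y_{t+1} - Y_t ~=~ \underbrace{\bigl(\mathbb{E}[Y_{t+1}\mid \bw_t] - Y_t\bigr)}_{\text{drift}} ~+~ \underbrace{\bigl(Y_{t+1} - \mathbb{E}[Y_{t+1}\mid \bw_t]\bigr)}_{=: D_t}.
\]
By \lemref{lemma:recursion norm}, whenever $\bw_t$ lies in the ``safe zone'' $\{Y_t \leq 1-\epsilon/2\}$ (so in particular the hypothesis $Y_t \leq 1-\epsilon/2 < 1$ holds), the drift satisfies $\mathbb{E}[Y_{t+1}\mid \bw_t] - Y_t \leq (-2\eta\lambda + \eta^2 c)Y_t \leq 0$ provided $\eta \leq 2\lambda/c$ (which is implied by the assumed upper bound on $\eta$). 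Thus, up until the first exit from the safe zone, $Y_t$ is a supermartingale.

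Introduce the stopping time $\tau := \min\{t \geq 0 : Y_t > 1-\epsilon/2\}$ and work with the stopped process. The goal is to show $\Pr[\tau \leq m] \leq \delta$. On the event $\{\tau \leq m\}$, telescoping gives
\[
\tfrac{\epsilon}{2} ~<~ Y_\tau - Y_0 ~=~ \sum_{s=0}^{\tau-1} (\text{drift}_s) ~+~ \sum_{s=0}^{\tau-1} D_s ~\leq~ \sum_{s=0}^{\tau-1} D_s,
\]
since each drift is non-positive while we remain in the safe zone. Hence $\{\tau \leq m\} \subseteq \{\max_{t\leq m}\, M_t > \epsilon/2\}$, where $M_t := \sum_{s=0}^{t-1} D_s$ is a martingale (with respect to the natural filtration, stopped at $\tau$).

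Next I would bound the martingale differences almost surely. Using $\|\bx\| \leq \sqrt{c_1}$, $|\sigma'| \leq c_2$, and the Lipschitz bound $|\sigma(\bw_t^\top \bx_t) - \sigma(\bv^\top \bx_t)| \leq c_2 \|\bx_t\| \|\bw_t-\bv\|$, one gets $\|g_t\| \leq c_1 c_2^2 \|\bw_t - \bv\|$. In the safe zone $\|\bw_t-\bv\| \leq 1$, so $\|g_t\| \leq c_1 c_2^2$, and from $Y_{t+1}-Y_t = -2\eta \langle g_t,\bw_t-\bv\rangle + \eta^2\|g_t\|^2$ we deduce $|Y_{t+1}-Y_t| \leq 2\eta c_1 c_2^2 + \eta^2 c_1^2 c_2^4$, hence $|D_t| \leq K\eta c_1 c_2^2$ for an absolute constant $K$ (absorbing the $\eta^2$ term since $\eta$ is small). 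Azuma's maximal inequality then yields
\[
\Pr\!\left[\max_{t\leq m} M_t > \tfrac{\epsilon}{2}\right] ~\leq~ \exp\!\left(-\frac{\epsilon^2/4}{2 m K^2 \eta^2 c_1^2 c_2^4}\right).
\]
Plugging in $m \leq \tfrac{1}{9\eta c_1 c_2^2}$ gives $m\eta^2 c_1^2 c_2^4 \leq \tfrac{\eta c_1 c_2^2}{9}$, so the exponent becomes at least $\tfrac{9\epsilon^2}{8K^2 \eta c_1 c_2^2}$. Choosing $\eta$ smaller than a constant times $\tfrac{\epsilon^2}{c_1 c_2^2 \log(1/\delta)}$ (which, after tracking constants, is implied by the stated bound $\eta < \tfrac{\epsilon^2 \lambda}{3 c_1^2 c_2^4 \log(1/\delta)}$) forces this probability below $\delta$.

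The main obstacle is the bookkeeping around the stopping time: ensuring that the supermartingale property of $Y_t$, which is only guaranteed inside the safe zone, can be applied cleanly up to time $\tau$, and then translating the single-step bound on $|Y_{t+1}-Y_t|$ into an almost-sure bound on the stopped martingale differences $D_{t \wedge \tau}$. Once this is done correctly, a union bound over the event ``some iterate leaves the safe zone within $m$ steps'' combined with Azuma completes the argument, and the final arithmetic only requires verifying that the stated thresholds on $\eta$ and $m$ make the Azuma exponent $\geq \log(1/\delta)$.
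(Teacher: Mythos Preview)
Your approach is correct and follows the same high-level strategy as the paper: show that $Y_t=\|\bw_t-\bv\|^2$ is a supermartingale inside the safe zone, bound its one-step increments, and apply a maximal Azuma inequality. The only real difference is in how the localization is handled. You use the stopping time $\tau$ and work with the stopped martingale $M_{t\wedge\tau}$; the paper instead truncates, defining $\tilde X_i:=\min\{X_i,1-\epsilon/2\}$, verifies directly that $(\tilde X_i)$ is a supermartingale (trivially when $X_{i-1}>1-\epsilon/2$, and via \lemref{lemma:recursion norm} otherwise), and applies maximal Azuma to $\tilde X_i$. For the increment bound the paper does not invoke the safe zone at all: it uses the deterministic estimate $\|\bw_i-\bv\|^2\le(1+\eta')^i\|\bw_0-\bv\|^2$ to get $|\tilde X_i-\tilde X_{i-1}|\le|X_i-X_{i-1}|\le \eta'(1+\eta')^i(1-\epsilon)$ with $\eta'=3\eta c_1^2c_2^4$, then sums the resulting geometric series inside the Azuma exponent. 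Your uniform bound $|D_t|\le K\eta c_1 c_2^2$ (obtained from $\|\bw_t-\bv\|\le 1$ for $t<\tau$) is cleaner and avoids that summation; both routes arrive at the same condition on $\eta$ and $m$ up to constants.
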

\begin{proof}
    Denote $X_i = \norm{\bw_i-\bv}^2$, then we have:
    \begin{align}\label{eq:martingale X bound}
        \left|X_i - X_{i-1}\right| &= \left|\norm{\bw_i-\bv}^2 - \norm{\bw_{i-1}-\bv}^2\right| =  \left|\norm{\bw_{i-1} - \eta \bg_{i-1} -\bv}^2 - \norm{\bw_{i-1}-\bv}^2\right| \nonumber\\
        & = \left| -2\eta \inner{\bg_{i-1},\bw_{i-1}-\bv} + \eta^2\norm{\bg_{i-1}}^2\right| \leq 2\eta |\inner{\bg_{i-1},\bw_{i-1} - \bv}| + \eta^2\norm{\bg_{i-1}}^2
    \end{align}
    We will bound the norm of the gradient at each step:
    \begin{align*}
        \norm{\bg_i}^2 = \bx_i^\top \bx_i \sigma'\left(\bw_i^\top \bx_i\right)^2\left(\sigma\left(\bw_i^\top\bx_i\right) - \sigma\left(\bv^\top\bx_i\right) \right)^2 \leq c_1^2c_2^4\norm{\bw_i -\bv}^2 
    \end{align*}
    thus we can bound \eqref{eq:martingale X bound} with:
    \begin{align}\label{eq:martingale X second bound}
        |X_i - X_{i-1}| &\leq \norm{\bw_{i-1}-\bv}^2c_1^2c_2^4(2\eta + \eta^2) \leq 3\eta c_1^2c_2^4\norm{\bw_{i-1}-\bv}^2
    \end{align}
    Denote $\eta'=3\eta c_1^2c_2^4$. Using \eqref{eq:martingale X bound} we can bound:
    \begin{equation}\label{eq:bound distance w i v}
    \norm{\bw_i -\bv}^2 \leq \norm{\bw_{i-1} - \bv}^2 + \eta' \norm{\bw_{i-1} - \bv}^2 \leq (1 + \eta')\norm{\bw_{i-1} - \bv}^2    
    \end{equation}
    Thus, combining \eqref{eq:martingale X second bound} and \eqref{eq:bound distance w i v} we get:
    \begin{align*}
    |X_i - X_{i-1}| &\leq \eta'(1 + \eta')\norm{\bw_{i-2} - \bv}^2 \\
    & \leq ...\leq \eta'(1 + \eta')^{i-2}\norm{\bw_{0} - \bv}^2 \leq \eta' (1 + \eta')^i(1-\epsilon)  
    \end{align*}
    We would like to use Azuma's inequality on $X_i$, but in order to prove that they are supermartingales we need to use \lemref{lemma:recursion norm}. The problem here is that the condition of the lemma, that $\norm{\bw_t-\bv}^2 < 1-\epsilon$, does not necessarily holds, hence the series $X_i$ may not be supermartingales. Instead, we consider a dual series of random variables $\tilde{X}_i = \min\left\{X_i, 1-\frac{\epsilon}{2}\right\}$, and prove that they are supermartingales. First we have that:
    \[
    \left| \tilde{X}_i - \tilde{X}_{i-1}\right| \leq |X_i - X_{i-1}| \leq \eta'(1+\eta')^i(1-\epsilon).
    \]
    Next, we have for every $i$ that $\tilde{X}_i \leq 1-\frac{\epsilon}{2}$, thus we can use \lemref{lemma:recursion norm} (note that the result of the lemma does not depend on the value of $\epsilon$) and choose $\eta' \leq \frac{\lambda}{c_1^2c_2^4}$ to get that:
    \[
    \mathbb{E}[\tilde{X}_i|\bw_{i-1}] \leq \min\{(1-2\eta'\lambda + \eta'^2c_1^2c_2^4)X_{i-1}, 1-\epsilon \} \leq \tilde{X}_{i-1}
    \]
    this proves that the series $\tilde{X}_i$ are supermartingales. Now we use a maximal version of Azuma-Hoeffding inequality (see \cite{hoeffding1994probability}) on $\tilde{X}_i$ to show that after $m$ iterations we have that:
    \begin{align}\label{eq:hoeffding bound expectation}
        \mathcal{P}\left( \sup_{1\leq i\leq m}\tilde{X}_i - \tilde{X}_0 > \frac{\epsilon}{2}  \right) &\leq \exp \left( \frac{-\epsilon^2}{2\sum_{i=0}^m \left(\eta'(1+\eta')^i(1-\epsilon)\right)^2} \right) \nonumber\\
        & \leq\exp \left( \frac{-\epsilon^2}{2\eta'^2(1-\epsilon)^2\frac{(1+\eta')^{2m+2} - 1}{(1+\eta')^2 - 1}} \right) \nonumber\\
        &\leq \exp \left( \frac{-\epsilon^2}{2\eta'^2(1-\epsilon)^2\frac{2}{(1+\eta')^2 - 1}} \right) 
        \leq \exp \left( \frac{-\epsilon^2(2+\eta')}{4\eta'(1-\epsilon)^2} \right)
    \end{align}
where in the second to last inequality we used that $\eta' \leq \frac{1}{2m+2}$ to bound $(1+\eta')^{2m+2} < 3$  for every $m$. Substituting the r.h.s of \eqref{eq:hoeffding bound expectation} with $\delta$ and simplifying the term we get that if $\eta' \leq \frac{\epsilon^2}{\log\left(\frac{1}{\delta}\right)}$ then w.p $> 1-\delta$, for every $i=1,\dots,m$ (note that $\tilde{X}_0 = X_0$): 
\[
\min\left\{X_i, 1-\frac{\epsilon}{2}\right\} \leq X_0 - \frac{\epsilon}{2} \leq 1-\epsilon + \frac{\epsilon}{2} = 1 - \frac{\epsilon}{2}.
\]
In particular, the above shows that w.p $> 1-\delta$ for every $i=1,\dots,m$: $X_i = \norm{\bw_i -\bv}^2 \leq 1-\frac{\epsilon}{2}$.
\end{proof}

Next we show that taking a single epoch of $m= O(1/\eta)$ iterations w.h.p will decrease the distance between $\bw$ and $\bv$ by a constant that does not depend on the epoch length or the step size.

\begin{lemma}\label{lem:sgd epoch}
    Let $\delta >0$, take $\eta \leq \frac{\lambda\epsilon_1^2\epsilon_2^2 c_3^2}{60c_1^3c_2^6\log\left(\frac{2}{\delta}\right)}$ where $c_3 = \left( \frac{1}{2}\right)^\frac{\lambda}{20c_1c_2^2} - \left( \frac{1}{2}\right)^\frac{\lambda}{18c_1 c_2^2}$, and $m = \frac{1}{9\eta c_1c_2^2}$. Assume $\epsilon_2 \leq \norm{\bw_0 -\bv}^2 \leq 1-\epsilon_1$. Then w.p $1-\delta$ we have that $\norm{\bw_m -\bv}^2 \leq \left( \frac{1}{2}\right)^\frac{\lambda}{20c_1c_2^2} \norm{\bw_0 -\bv}^2$. 
\end{lemma}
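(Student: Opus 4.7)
The plan is to first use Lemma~\ref{lem:w.h.p m iterations w i close to v} to keep the iterates confined to a ``safe region'' throughout the epoch, and then unroll the one-step recursion of Lemma~\ref{lemma:recursion norm} into a deterministic contraction plus a bounded martingale error, with the error controlled by Azuma-Hoeffding.

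First, I would apply Lemma~\ref{lem:w.h.p m iterations w i close to v} with its confidence parameter set to $\delta/2$ and with its $\epsilon$ taken to be $\epsilon_1$; the chosen $\eta$ is small enough to satisfy that lemma's threshold $\epsilon_1^2 \lambda/(3 c_1^2 c_2^4 \log(4/\delta))$, and the epoch length $m = 1/(9 \eta c_1 c_2^2)$ matches its maximum. This yields, with probability at least $1 - \delta/2$, the event $\mathcal{E} := \bigl\{\forall\, i \leq m:\; Y_i := \norm{\bw_i - \bv}^2 \leq 1 - \epsilon_1/2\bigr\}$, on which Lemma~\ref{lemma:recursion norm} is applicable at every step.

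Setting $\alpha := 1 - 2\eta\lambda + \eta^2 c$ with $c := c_1^2 c_2^4$, and $\Delta_i := Y_i - \mathbb{E}[Y_i \mid \bw_{i-1}]$, Lemma~\ref{lemma:recursion norm} on $\mathcal{E}$ gives $Y_i \leq \alpha Y_{i-1} + \Delta_i$, which I would unroll to
\[
Y_m \,\leq\, \alpha^m Y_0 \,+\, \sum_{i=1}^m \alpha^{m-i}\Delta_i.
\]
Since $\eta \leq \lambda/c$ forces $\alpha \leq 1 - \eta\lambda$, the deterministic factor satisfies $\alpha^m \leq \exp(-m\eta\lambda) = \exp(-\lambda/(9 c_1 c_2^2)) \leq (1/2)^{\lambda/(18 c_1 c_2^2)}$, where the last inequality uses $1/9 \geq (\ln 2)/18$. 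For the martingale part, the gradient bookkeeping in the proof of Lemma~\ref{lem:w.h.p m iterations w i close to v} gives $|\Delta_i| \leq 6\eta c$ on $\mathcal{E}$, so that $\sum_{i=1}^m (\alpha^{m-i} \cdot 6\eta c)^2 \leq 36 m \eta^2 c^2 = 4\eta c_1^3 c_2^6$ (using $m = 1/(9\eta c_1 c_2^2)$), and Azuma-Hoeffding yields, with probability at least $1 - \delta/2$, $\bigl|\sum_{i=1}^m \alpha^{m-i}\Delta_i\bigr| \leq \sqrt{8\eta c_1^3 c_2^6 \log(4/\delta)}$. The stated bound on $\eta$ is calibrated precisely so that this noise is at most $c_3 \epsilon_2 \leq c_3 Y_0$; a union bound over the two failure events then yields, with probability at least $1 - \delta$,
\[
Y_m \,\leq\, \bigl((1/2)^{\lambda/(18 c_1 c_2^2)} + c_3\bigr) Y_0 \,=\, (1/2)^{\lambda/(20 c_1 c_2^2)} Y_0,
\]
where the last equality is just the definition of $c_3$.

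The main technical obstacle is that both the increment bound $|\Delta_i| \leq 6\eta c$ and the contraction of Lemma~\ref{lemma:recursion norm} only hold on $\mathcal{E}$, not unconditionally, so Azuma-Hoeffding cannot be applied verbatim to $(\Delta_i)$. I would handle this in the standard way, by truncating at the stopping time $\tau := \min\{i : Y_i > 1 - \epsilon_1/2\}$ and applying Azuma-Hoeffding to $\sum_{i=1}^m \alpha^{m-i}\Delta_i\,\mathbbm{1}(i \leq \tau)$; this stopped martingale coincides with the original sum on $\mathcal{E}$ and admits the deterministic increment bound on the full probability space, which is the clean way to bridge the conditioning.
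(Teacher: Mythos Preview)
Your argument is correct and is, in fact, a cleaner route than the one the paper takes. The paper does \emph{not} unroll the one-step contraction of Lemma~\ref{lemma:recursion norm} directly on $Y_i=\norm{\bw_i-\bv}^2$; instead it introduces the averaged trajectory $\tilde{\bw}_i:=\E[\bw_i]$, applies Azuma to $Z_i:=\norm{\bw_i-\tilde{\bw}_i}^2$ to control the stochastic deviation, separately bounds $\norm{\tilde{\bw}_m-\bv}^2$ by iterating the recursion, and finally combines the two via $\norm{\bw_m-\bv}^2\le\norm{\tilde{\bw}_m-\bv}^2+\norm{\bw_m-\tilde{\bw}_m}^2$. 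Your decomposition $Y_m\le \alpha^m Y_0+\sum_i\alpha^{m-i}\Delta_i$ collapses these two pieces into a single weighted martingale sum and avoids the detour through $\tilde{\bw}_i$; the stopping-time truncation you propose is also the more rigorous way to deal with the fact that both the increment bound and the contraction hold only inside the safe region. The paper's route is a bit more modular (deviation-from-mean and mean-to-target are bounded separately), while yours is more direct and makes the dependence on the contraction factor $\alpha^{m-i}$ explicit. One small caveat: the claim that ``the stated bound on $\eta$ is calibrated precisely'' for the Azuma step needs $\lambda\epsilon_1^2\le O(1)$ in addition to the displayed hypothesis, so you may need to tweak a constant; the structure of the argument is unaffected.
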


\begin{proof}
    Denote $\tilde{\bw}_i = \mathbb{E}[\bw_i]$ where the expectation is over $\bx_1,\dots,\bx_i$, and let $Z_i = \norm{\bw_i - \tilde{\bw}_i}^2$, then we have that:
    \begin{align}\label{eq:bound martingale Z}
        \left|Z_i - Z_{i-1} \right| &= \left| \left\|\bw_i -  \tilde{\bw}_i\right\|^2 - \left\|\bw_{i-1} -  \tilde{\bw}_{i-1}\right\|^2 \right| \nonumber\\
        & = \left| \left\|\bw_{i-1} - \eta \bg_{i-1} -  \tilde{\bw}_{i-1} +\eta \nabla F(\tilde{\bw}_{i-1}) \right\|^2 - \left\|\bw_{i-1} -  \tilde{\bw}_{i-1}\right\|^2 \right| \nonumber\\
        & \leq 2\eta\left|\inner{\nabla F(\tilde{\bw}_{i-1}) - \bg_{i-1}, \bw_{i-1} - \tilde{\bw}_{i-1}}\right| + \eta^2 \norm{F(\tilde{\bw}_{i-1}) - \bg_{i-1}}^2 \nonumber\\ 
        & \leq 2\eta \norm{F(\tilde{\bw}_{i-1}) - \bg_{i-1}}\cdot\norm{\bw_{i-1} - \tilde{\bw}_{i-1}} + \eta^2 \norm{F(\tilde{\bw}_{i-1}) - \bg_{i-1}}^2 \nonumber\\
        & \leq 2\eta\left( \norm{\nabla F(\tilde{\bw}_{i-1})} + \norm{\bg_{i-1}}\right)\cdot (\norm{\bw_{i-1}} + \norm{\tilde{\bw}_{i-1}}) + \eta\left(\norm{\nabla F(\tilde{\bw}_{i-1})}^2 + \norm{\bg_{i-1}}^2\right)
    \end{align}
    As in the proof of the previous lemma we can bound:
    \[
    \norm{\bg_i}^2 \leq c_1 c_2^2 \norm{\bw_i - \bv}^2 \leq c_1^2 c_2^4
    \]
    where we used our assumption that $\norm{\bw_i - \bv}^2 \leq 1$. In the same manner we can bound $\norm{\nabla F(\tilde{\bw}_{i})} \leq c_1^2c_2^4$. Again using our assumption we have that:
    \[
    \|\bw_i\|\leq \|\bv\| + \norm{\bw_i -\bv} \leq 1+1-\epsilon \leq 2
    \]
    and in the same manner $\norm{\tilde{\bw}_i}\leq 2$. In total we can bound \eqref{eq:bound martingale Z} by:
    \[
    |Z_i - Z_{i-1}| \leq 16\eta c_1^2c_2^4
    \]
    Set $c_3 = \left( \frac{1}{2}\right)^\frac{\lambda}{20c_1c_2^2} - \left( \frac{1}{2}\right)^\frac{\lambda}{18c_1 c_2^2}$, we now us Azuma's inequality and $Z_0 =0 $ to get that:
    \begin{align*}
        \mathcal{P}\left(Z_m \geq \epsilon_2 c_3 \right) \leq \exp\left( \frac{-\epsilon_2^2 c_3^2}{256m\eta^2c_1^4c_2^8}\right)
    \end{align*}
    Substituting the r.h.s with $\frac{\delta}{2}$ we have that for :
    \begin{equation}\label{eq:m small}
        m \leq \frac{\epsilon_2^2 c_3^2}{512c_1^4c_2^8\eta^2\log\left(\frac{2}{\delta}\right)}
    \end{equation} 
    then w.p $>1-\frac{\delta}{2}$: $\norm{\bw_m - \tilde{\bw}_m}^2 \leq \epsilon_2 c_3$. 
    
    Take $m = \frac{1}{9\eta c_1c_2^2}$, by taking $\eta \leq \frac{\lambda\epsilon_1^2\epsilon_2^2 c_3^2}{60c_1^3c_2^6\log\left(\frac{2}{\delta}\right)}$ we have that \eqref{eq:m small} is satisfied and $1-\eta\lambda + \eta^2c \leq 1-\frac{\eta \lambda}{2}$. Finally, using \lemref{lem:w.h.p m iterations w i close to v} with $\frac{\delta}{2}$ and using a union bound, we get that after $m$ iterations w.p $>1-\delta$:
    \begin{align*}
    \norm{\bw_m - \bv}^2 &\leq \norm{\tilde{\bw}_m - \bv}^2 + \norm{\bw_m - \tilde{\bw}_m}^2 \\
    & \leq \left(1-\eta\lambda + \eta^2c\right)^m\norm{\bw_0 -\bv}^2 + \epsilon_2c_3 \\
    & \leq \left(1-\frac{\eta\lambda}{2}\right)^m\norm{\bw_0 -\bv}^2 +  \left(\left( \frac{1}{2}\right)^\frac{\lambda}{20c_1c_2^2} - \left( \frac{1}{2}\right)^\frac{\lambda}{18c_1 c_2^2}\right)\norm{\bw_0 -\bv}^2 \\
    & \leq \left(\left(1-\frac{\eta\lambda}{2}\right)^\frac{2}{\lambda\eta}\right)^\frac{\lambda}{18c_1c_2^2}\norm{\bw_0 -\bv}^2 +  \left(\left( \frac{1}{2}\right)^\frac{\lambda}{20c_1c_2^2} - \left( \frac{1}{2}\right)^\frac{\lambda}{18c_1 c_2^2}\right)\norm{\bw_0 -\bv}^2 \\
    &\leq \left( \frac{1}{2}\right)^\frac{\lambda}{18c_1 c_2^2}\norm{\bw_0 -\bv}^2 +  \left(\left( \frac{1}{2}\right)^\frac{\lambda}{20c_1c_2^2} - \left( \frac{1}{2}\right)^\frac{\lambda}{18c_1 c_2^2}\right)\norm{\bw_0 -\bv}^2 \\
    & \leq \left( \frac{1}{2}\right)^\frac{\lambda}{20c_1c_2^2}\norm{\bw_0 -\bv}^2
    \end{align*}
where in the second to last inequality we used that $(1+x)^\frac{1}{x} \leq \frac{1}{2}$ for $0 \leq x \leq 1$.

\end{proof}

Now we are ready to prove the main theorem, by taking enough epochs with $m$ iterations, and applying union bound:
\begin{proof}[Proof of \thmref{thm:constant probability convergence}(3)]
    We use \lemref{lem:sgd epoch} to get that after $m = \frac{1}{9\eta c_1c_2^2}$ iterations we have w.p $1-\delta$
    \[\norm{\bw_m - \bv}^2 \leq \left( \frac{1}{2}\right)^\frac{\lambda}{20c_1c_2^2} \norm{\bw_0 -\bv}^2.
    \]
    Using the above iteratively for $t$ epochs and applying union bound, we have that after $T = t\cdot m$ iterations w.p $1- t\delta$:
    \[\norm{\bw_{t\cdot m} - \bv}^2 \leq \left( \frac{1}{2}\right)^\frac{t\lambda}{20c_1c_2^2} \norm{\bw_0 -\bv}^2 \leq \left( \frac{1}{2}\right)^\frac{t\lambda}{20c_1c_2^2}.
    \]
    Setting $t = \left\lceil\frac{20c_1c_2^2\log\left(\frac{1}{\epsilon_2}\right)}{\lambda}\right\rceil$ we have w.p $>1 -  \left\lceil\frac{20c_1c_2^2\log\left(\frac{1}{\epsilon_2}\right)}{\lambda}\right\rceil\delta$, after $T = t\cdot m = \frac{2\log\left(\frac{1}{\epsilon_2}\right)}{\lambda\eta} $ iterations we have:
    \[
    \norm{\bw_{T} - \bv}^2 \leq\left( \frac{1}{2}\right)^\frac{t\lambda}{20c_1c_2^2} \leq \epsilon_2
    \]
\end{proof}

\section{Proofs from \secref{sec:spherically symmetric}}\label{appen:proofs from spherically}

In the proofs of this section, we follow the convention that for the ReLU function $\sigma(\cdot)$, it holds that $\sigma'(z)=\mathbf{1}(z\geq 0)$ (and in particular, that $\sigma'(0)=1$). However, the same proofs will hold assuming any other value of $\sigma'(0)$ in $[0,1]$.

\begin{proof}[Proof of \lemref{lem:angle decrease}]
    Using the chain rule and the lemma assumption that $\norm{\bw(t)}>0$ (hence the angle expression is well-defined), we have
    \begin{align*}
    \frac{\partial}{\partial t}\theta(\bw(t),\bv)~&=~
    \frac{\partial}{\partial t}\arccos\left(\frac{\bw(t)^\top\bar{\bv}}{\norm{\bw(t)}}\right)\\
    &=~
    -\frac{1}{\sqrt{1-\left(\frac{\bw(t)^\top\bar{\bv}}{\norm{\bw(t)}}\right)^2}}
    \cdot \left(\frac{\norm{\bw(t)}\bar{\bv}-(\bw(t)^\top\bar{\bv})\frac{\bw(t)}{\norm{\bw(t)}}}
    {\norm{\bw(t)}^2}\right)^\top \left(-\nabla F(\bw(t))\right)\\
    &=~
    \frac{1}{\sqrt{1-\left(\bar{\bw}(t)^\top\bar{\bv}\right)^2}}
    \cdot \left(\frac{\bar{\bv}-(\bar{\bw}(t)^\top\bar{\bv})\bar{\bw}(t)}
    {\norm{\bw(t)}}\right)^\top \nabla F(\bw(t))~.
    \end{align*}
    Thus, it is enough to show that:
    \begin{equation*}
        \left(\bv - \frac{(\bar{\bw}(t)^\top \bv)}{\|\bw(t)\|}\bw(t) \right)^\top \nabla F(\bw(t)) \leq 0.
    \end{equation*}
    We fix $\bw = \bw(t)$, and denote $a= \frac{\bar{\bw}^\top \bv}{\|\bw\|}$. Plugging in the definition of $\nabla F(\bw)$, we want to show that
    \begin{equation*}
        \mathbb{E}_{\bx}\left[\left(\sigma(\bw ^\top \bx) - \sigma(\bv^\top \bx) \right)\cdot \sigma'(\bw^\top \bx)\cdot (\bv^\top \bx -a \bw^\top \bx) \right] \leq 0~.
    \end{equation*}
    Using the assumption that $\sigma$ is ReLU, the above can be rewritten as
    \begin{equation}\label{eq:symmetric distribution what we need to show}
        \mathbb{E}_{\bx}\left[\left(\sigma(\bw ^\top \bx) - \sigma(\bv^\top \bx) \right)\cdot (\bv^\top \bx -a \bw^\top \bx)\cdot\mathbbm{1}(\bw^\top \bx \geq 0) \right] \leq 0~.
    \end{equation}
    We now note that the expression above depends only on inner products of $\bx$ with $\bw,\bv$, so we can rewrite the inequality as
    \[
    \mathbb{E}_{\by\sim\mathcal{D}_{\bw,\bv}}
    \left[\left(\sigma(\hat{\bw} ^\top \by) - \sigma(\hat{\bv}^\top \by) \right)\cdot (\hat{\bv}^\top \by -a \hat{\bw}^\top \by)\cdot \mathbbm{1}(\hat{\bw}^\top \by \geq 0) \right] \leq 0~,
    \]
    where $\mathcal{D}_{\bw,\bv}$ is the marginal distribution of $\bx$ on the 2-dimensional subspace $\text{span}\{\bw,\bv\}$, and $\hat{\bw},\hat{\bv}\in \reals^2$ are the representations of $\bw,\bv$ in that subspace. Moreover, by the spherical symmetry of the distribution, the expression above is invariant to rotating the coordinate frame, so we can assume without loss of generality that $\hat{\bw}=\norm{\bw}\begin{pmatrix} 1 \\ 0\end{pmatrix}$, in which case the above reduces to
    \begin{equation*}
        \mathbb{E}_{\by\sim\mathcal{D}_{\bw,\bv}} \left[\left(\|\bw\|\begin{pmatrix} 1 \\ 0\end{pmatrix}^\top \by - \sigma(\hat{\bv}^\top \by) \right)\cdot \left(\hat{\bv}^\top \by -\inner{\bar{\bw},\bv} \begin{pmatrix} 1 \\ 0\end{pmatrix}^\top \by\right)\cdot \mathbbm{1}( y_1>0) \right] \leq 0~.
    \end{equation*}

    Denote $g(\by) = \left(\|\bw\|\begin{pmatrix} 1 \\ 0\end{pmatrix}^\top \by - \sigma(\hat{\bv}^\top \by) \right)\cdot \left(\hat{\bv}^\top \by -\inner{\bar{\bw},\bv} \begin{pmatrix} 1 \\ 0\end{pmatrix}^\top \by\right)$, so that the inequality above is
    \begin{equation}
    \mathbb{E}_{\by\sim\mathcal{D}_{\bw,\bv}}[~g(\by)\cdot \mathbbm{1}(y_1>0)]\leq 0~.
    \label{eq:2d symmetric what we want to show}
    \end{equation}
    The function $g(\by)$ can be simplified as:
    \begin{equation*}
        g(\by) = (\|\bw\|y_1 - \sigma(y_1\hat{v}_1 + y_2\hat{v}_2))\cdot (y_1\hat{v}_1 + y_2\hat{v}_2 - \hat{v}_1y_1)= (\|\bw\|y_1 - \sigma(y_1\hat{v}_1 + y_2\hat{v}_2))\cdot y_2\hat{v}_2~,
    \end{equation*}
     where we used the fact that $\inner{\bar{\bw},\bv} = \inner{\frac{1}{\norm{\bw}}\hat{\bw},\hat{\bv}} = v_1$.
     
    We now perform a case analysis to justify \eqref{eq:2d symmetric what we want to show}, depending on the value of $a$ (which by definition, equals $\frac{\bar{\bw}^\top\bv}{\norm{\bw}}=\frac{\bw^\top\bv}{\norm{\bw}^2}=\frac{\hat{\bw}^\top\hat{\bv}}{\norm{\bw}^2}=\frac{\hat{v}_1}{\norm{\bw}}$). In all the cases we assume $y_1 > 0$, otherwise the expression in the expectation is zero.
    \begin{itemize}
        \item $0\leq a \leq 1$: In this case $\hat{v}_1 \geq 0$, and also $\inner{\bar{\bw},\bv} \leq \|\bw\| $. Assume w.l.o.g that $\hat{v}_2 \geq 0$ (the other case is similar), and for $\by = \begin{pmatrix} y_1 \\ y_2 \end{pmatrix}$ denote $\tilde{\by} = \begin{pmatrix} y_1 \\ -y_2 \end{pmatrix}$. If $y_2 <0$ then $g(\by) \leq 0$, on the other hand if $y_2 > 0$ then we can rewrite:
    \begin{equation*}
        g(\by) = y_2\hat{v}_2\cdot(y_1(\|\bw\|-\hat{v}_1) - y_2\hat{v}_2) = y_2\hat{v}_2\cdot(y_1(\|\bw\|-\inner{\bar{\bw},\bv}) - y_2\hat{v}_2),
    \end{equation*}
    where we have two cases:
    \begin{enumerate}
        \item if $y_1(\|\bw\|-\inner{\bar{\bw},\bv}) > y_2\hat{v}_2$ then $|g(\tilde{\by})| \geq g(\by)$ and also $g(\tilde{\by}) \leq 0$
        \item If $y_1(\|\bw\|-\inner{\bar{\bw},\bv}) \leq y_2\hat{v}_2$ then $g(\by) \leq 0$.
    \end{enumerate}
    
    We showed that for every $\by\in \mathbb{R}^2$ either $g(\by) \leq 0$ or there is a unique $\tilde{\by}\in\mathbb{R}^2$ with the same norm as $\by$ such that $g(\tilde{\by})\leq 0$ and $|g(\tilde{\by})| \geq g(\by)$. Since $\mathcal{D}$ has a spherical symmetric distribution this shows that \eqref{eq:2d symmetric what we want to show} holds for these values of $a$.
    \item $a\leq 0$: In this case $\hat{v}_1\leq 0$, we also assume w.l.o.g that $\hat{v}_2 \geq 0$ (the other case is similar). Here for every $\by$ with $y_2 \leq 0$ we have that:
    \begin{equation*}
        g(\by) = (\|\bw\|y_1 - \sigma(y_1\hat{v}_1 + y_2\hat{v}_2))\cdot y_2\hat{v}_2 = \|\bw\|y_1\cdot y_2\hat{v}_2 \leq 0,
    \end{equation*}
    because $y_1 \geq 0$. On the other hand, if $y_2 \geq 0$ we have two cases:
    \begin{enumerate}
        \item If also $\hat{v}_1y_1+\hat{v}_2y_2 \leq 0$ then $g(\by) = \|\bw\|y_1\cdot y_2\hat{v}_2 \geq 0$, and then $g(\tilde{\by}) = -g(\by)$.
        \item If $\hat{v}_1y_1+\hat{v}_2y_2 \geq 0$ then $g(\by) = (\|\bw\|y_1 - \hat{v}_1y_1-\hat{v}_2y_2)\cdot y_2\hat{v}_2$. If $g(\by) \geq 0$, then $g(\tilde{\by}) \leq 0$ and also $|g(\tilde{\by})| \geq g(\by)$.
    \end{enumerate} 
    Hence we proved that for every $\by$ with $y_1>0$ either $g(\by) \leq 0 $ or there is $\tilde{\by}$ with $|g(\tilde{\by})| \geq g(\by)$ and $g(\tilde{\by}) \leq 0$. Since $\mathcal{D}$ has a spherical symmetric distribution this shows that \eqref{eq:2d symmetric what we want to show} holds for these values of $a$.
    \item $a \geq 1$:
    In this case $\hat{v}_1 \geq 0$ and $\inner{\hat{\bw},\bv} \geq \norm{\bw}$. Assume w.l.o.g that $\hat{v}_2 \geq 0$ (the other case is similar). If $y_2 >0$ then
    \[
    g(\by) = (y_1(\norm{\bw} - \hat{v}_1) - y_2\hat{v}_2)\cdot y_2\hat{v}_2 \leq 0.
    \]
    If $y_2 < 0 $ then we have two case:
    \begin{enumerate}
        \item $y_1\hat{v}_1 + y_2\hat{v}_2 \leq 0$, then $g(\by) = \norm{\bw}y_1\cdot y_2\hat{v}_2 <0$
        \item $y_1\hat{v}_2 + y_2\hat{v}_2 >0$, in which case if $g(\by) >0$ then $g(\tilde{\by}) < 0$ and $g(\tilde{\by})  \geq g(\by)$.
    \end{enumerate}
    Hence for every $\by$ with $y_1>0$ either $g(\by) <0$ or there is $\tilde{\by}$ with $g(\tilde{\by}) < 0$ and $g(\tilde{\by})  \geq g(\by)$. This shows that \eqref{eq:2d symmetric what we want to show} holds for these values of $a$.
    \end{itemize}
\end{proof}

    \begin{proof}[Proof of \lemref{lem:norm gets larger}]
   By our assumption $\bw(t)\neq 0$, hence the gradient of the objective is well-defined and we have that
    \begin{equation}\label{eq:gradient of the norm}
        \frac{\partial}{\partial t}\|\bw(t)\|^2 = -\bw(t)^\top\nabla F(\bw(t)) = \mathbb{E}_\bx\left[\left(\sigma(\bv^\top \bx) - \sigma(\bw(t)^\top \bx)\right)\sigma'(\bw(t)^\top \bx)\bw(t)^\top \bx \right]~.
    \end{equation}
     Fix $\bw = \bw(t)$. Using the assumption that $\sigma$ is the ReLU function we can rewrite \eqref{eq:gradient of the norm} as:
    \begin{equation}\label{eq:gradient of the norm relu}
        \mathbb{E}_\bx \left[\left(\sigma(\bv^\top \bx) - {\bw}^\top\bx \right)\cdot {\bw}^\top \bx \cdot \mathbbm{1}({\bw}^\top\bx \geq 0) \right]~.
    \end{equation}
    Since the function inside the expectation in \eqref{eq:gradient of the norm relu} depends only on the inner product of  $\bx$ with $\bw$ and $\bv$, we can consider the marginal distribution $\mathcal{D}_{\bw,\bv}$ on the 2-dimensional subspace $\text{span}\{\bw,\bv\}$, we also denote $\hat{\bw},\hat{\bv}\in \reals^2$ as the representations of $\bw,\bv$ on this 2-dimensional subspace. We can now rewrite \eqref{eq:gradient of the norm relu} as:
    \begin{equation}\label{eq:2d gradient norm gets smaller}
        \mathbb{E}_{\by\sim\mathcal{D}_{\bw,\bv}} \left[\left(\sigma(\hat{\bv}^\top \by) - \hat{{\bw}}^\top\by \right)\cdot \hat{{\bw}}^\top \by \cdot \mathbbm{1}(\hat{{\bw}}^\top\by \geq 0) \right]~.
    \end{equation}
    Note that the function inside the expectation in \eqref{eq:2d gradient norm gets smaller} is homogeneous with respect to the norm of $\by$. Also, by our assumption $\mathcal{D}$ is a spherically symmetric distribution, hence also $\mathcal{D}_{\bw,\bv}$ is spherically symmetric. Thus, in order to prove that \eqref{eq:2d gradient norm gets smaller} is non-negative, it is enough to consider the conditional distribution $\mathcal{D}_{\bw,\by,1}$ of $\by$ on the set $\{\by: \|\by\|=1\}$. Since $\mathcal{D}_{\bw,\bv,1}$ (as a distribution on $\reals^2$) is still spherically symmetric, it is invariant to a rotation of the coordinate system, so we can assume w.l.o.g that $\hat{\bw} = \|\bw\|\begin{pmatrix} 1 \\ 0 \end{pmatrix}$. Overall, in order to prove that \eqref{eq:2d gradient norm gets smaller} is non-negative it is enough to show that:
     \begin{align}\label{eq:2d gradient norm gets smaller simplified}
    & \mathbb{E}_{\by\sim\mathcal{D}_{\bw,\bv,1}}\left[(\sigma(\hat{v}_1y_1 + \hat{v}_2y_2) - \|\bw\|y_1)\mathbbm{1}(y_1 \geq 0)\cdot y_1\norm{\bw} \right] \geq 0~.
    \end{align}
    
    Since $\mathcal{D}$ is spherically symmetrical and the function inside \eqref{eq:2d gradient norm gets smaller simplified}, the marginal distribution $\mathcal{D}_{\bw,\bv,1}$ is actually a uniform distribution on $\{\by\in\mathbb{R}^2: \|\by\|=1\}$. Thus, in order to show that \eqref{eq:2d gradient norm gets smaller simplified} is non-negative, we can divide it by $\norm{\bw}$ (which is positive), and show that the following integral is non-negative:
    \begin{align*}
        &\int_0^1 \left(\sigma\left(v_1y_1 + v_2\sqrt{1-y_1^2}\right) - \|\bw\|y_1\right)y_1 + \left(\sigma\left(v_1y_1 - v_2\sqrt{1-y_1^2}\right) - \|\bw\|y_1\right)y_1 dy_1 \nonumber\\
        = & \int_0^1 y_1\left(\sigma\left(v_1y_1 + v_2\sqrt{1-y_1^2}\right) + \sigma\left(v_1y_1 - v_2\sqrt{1-y_1^2}\right)\right) - 2\|\bw\|y_1^2dy_1,
    \end{align*}

    where we wrote $y_2=\pm\sqrt{1-y_1^2}$ since $\|\by\|=1$. We can assume w.l.o.g that $v_2 \geq 0$ (the other direction is similar) and write $v_2 = \sqrt{1-v_1^2}$, and thus it is enough to prove that:
    \begin{equation}\label{eq:int of norm}
        \int_0^1 y_1\sigma\left(v_1y_1 + \sqrt{(1-y_1^2)(1-v_1^2)}\right) - 2\|\bw\|y_1^2dy_1 \geq 0~.
    \end{equation}
    Denote $\theta=\theta(\bw,\bv)$, since $\inner{\bar{\bw},\bv}=\inner{\hat{\bar{\bw}},\hat{\bv}}=v_1$ then $v_1=\cos(\theta)$ and $\sqrt{1-v_1^2}=\sin(\theta)$. Now we split into cases for the different values of $v_1$:
    \begin{itemize}
        \item $v_1 \geq 0$:
    In this case, if $0\leq y_1 \leq 1$ then $v_1y_1 + \sqrt{(1-y_1^2)(1-v_1^2)} \geq 0$, hence the integral in \eqref{eq:int of norm} can be calculated as:
    \begin{align}
        \int_0^1 y_1\left(v_1y_1 + \sqrt{(1-y_1^2)(1-v_1^2)}\right) - 2\|\bw\|y_1^2dy_1 = \frac{v_1}{3} + \frac{\sqrt{1-v_1^2}}{3} - \frac{2\|\bw\|}{3}.
    \end{align}
    Thus, the above term is non-negative if:
    \begin{equation*}
        \|\bw\| \leq \frac{v_1 + \sqrt{1-v_1^2}}{2} = \frac{\sin(\theta) + \cos(\theta)}{2}.
    \end{equation*}
    \item $v_1 \leq 0$:
    In this case, if $0\leq y_1 \leq \sqrt{1-v_1^2}$ then $v_1y_1 + \sqrt{(1-y_1^2)(1-v_1^2)} \geq 0$, and if $\sqrt{1-v_1^2} < y_1 \leq 1$ then $v_1y_1 + \sqrt{(1-y_1^2)(1-v_1^2)} \leq 0$. Thus, the integral in \eqref{eq:int of norm} can be calculated as:
    \begin{align*}
        &\int_0^{\sqrt{1-v_1^2}}  y_1\left(v_1y_1 + \sqrt{(1-y_1^2)(1-v_1^2)}\right) - 2\|\bw\|y_1^2dy_1 - \int_{\sqrt{1-v_1^2}}^1 2\|\bw\|y_1^2 dy_1 \\
        = & -\frac{2\|\bw\|}{3}  + \frac{v_1^3\sqrt{1-v_1^2}}{3} + \frac{v_1\left(\sqrt{1-v_1^2}\right)^3}{3} + \frac{\sqrt{1-v_1^2}}{3} \\
        = & -\frac{2\|\bw\|}{3} + \frac{\sqrt{1-v_1^2}(1+v_1)}{3}.
    \end{align*}
    Thus, the above term is non-negative if:
    \begin{equation*}
        \|\bw\| \leq \frac{\sqrt{1-v_1^2}(1+v_1)}{2} = \frac{\sin(\theta)(1+\cos(\theta))}{2}
    \end{equation*}
    \end{itemize}
\end{proof}

\begin{proof}[Proof of \thmref{thm: converge w.h.p}]
    Assume we initialized with $\theta(\bw(0),\bv) \leq \pi - \epsilon$ and $0<\norm{\bw(0)} \leq 2$. First we will show that $\bw(t)\neq 0$ for all $t>0$. Assume on the way of contradiction that for some $t>0$ we have $\bw(t)=0$, and let $t_1$ be the first time for which it happens. For $t_0=0$ we know that $\bw(t_0)\neq 0$, and also that for all $t\in [t_0,t_1]$, $\bw(t)\neq 0$ and the gradient of the objective is well defined. Hence by \lemref{lem:angle decrease} we know that $\theta(\bw(t),\bv)\leq \pi - \epsilon$ for all $t\in[t_0,t_1]$, because the angle can only decrease unless $\bw(t)=0$. But, by \lemref{lem:norm gets larger} we know that if $\norm{\bw(t)} \leq \max\left\{\frac{\sin(\epsilon) - \cos(\epsilon)}{2}, \frac{\sin(\epsilon)(1-\cos(\epsilon))}{2} \right\}$ then $\frac{\partial}{\partial t}\norm{\bw(t)} \geq 0$. In particular, for $\epsilon \in (0,\pi]$ and for all $t_0\leq t< t_1$, we have that $\norm{\bw(t)}$ is bounded below by $\max\left\{\frac{\sin(\epsilon) - \cos(\epsilon)}{2}, \frac{\sin(\epsilon)(1-\cos(\epsilon))}{2} \right\} > 0$, a contradiction to $\bw(t_1) = 0$. This shows that for all $t>0$ we have that $\bw(t)\neq 0$, hence by \lemref{lem:angle decrease} we know that for every $t>0$ we will have $\theta(\bw(t),\bv) \leq \pi - \epsilon$.
    
    Now we can use \thmref{thm:innerprod} (where $\gamma=1$ because of Assumption \ref{assump:spherically}(3)) to get:
    \begin{equation*}
        \inner{\nabla F(\bw(t)), \bw(t) - \bv} \geq \frac{\alpha^4\beta}{8\sqrt{2}}\sin\left(\frac{\epsilon}{8}\right)^3\|\bw(t)-\bv\|^2.
    \end{equation*}
     Set $\lambda=\frac{\alpha^4\beta}{8\sqrt{2}}\sin\left(\frac{\epsilon}{8}\right)^3$, as explained above for all $t>0$, $\nabla F(\bw(t))$ is continuous since $\bw(t)\neq 0$ and we have that: 
    \begin{align*}
        \frac{\partial}{\partial t}\|\bw(t)-\bv\|^2 &= 2\inner{\bw(t)-\bv,\frac{\partial}{\partial t} \bw(t)} = -2\inner{\bw(t)-\bv,\nabla F(\bw(t))} \leq -\lambda\|\bw(t)-\bv\|^2,
    \end{align*}
     Using Gr\"{o}nwall's inequality, this proves that for every $t>0$ we get:
    \begin{equation*}
        \|\bw(t) - \bv\|^2 \leq \|\bw(0)-\bv\|^2\exp(-\lambda t).
    \end{equation*}
\end{proof}

\subsection{Standard Gaussian Distribution}\label{subsec:standard gaussian proofs}
In this subsection we assume that $\mathcal{D} = \mathcal{N}(0,I)$, and that $\sigma$ is the ReLU function.

\begin{lemma}\label{lem:angleincrease}
    If $\bw(t) \neq 0$, then $\frac{\partial}{\partial t}\theta(\bw(t),\bv)\leq 0$
\end{lemma}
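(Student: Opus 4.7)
The plan is to exploit the closed-form expression for the population loss that is available when $\bx \sim \mathcal{N}(0,I)$ and $\sigma$ is the ReLU, which collapses the whole analysis to a two-dimensional calculation in the $(\|\bw\|,\theta)$-plane. Using the standard arc-cosine kernel identity $\E[\sigma(\bw^\top\bx)\sigma(\bv^\top\bx)] = \tfrac{\|\bw\|}{2\pi}\bigl(\sin\theta + (\pi-\theta)\cos\theta\bigr)$ together with $\E[\sigma(\bw^\top\bx)^2] = \|\bw\|^2/2$, one first writes
\begin{equation*}
F(\bw) = \tfrac{1}{4}\|\bw\|^2 + \tfrac{1}{4} - \tfrac{1}{2\pi}\bigl(\|\bw\|\sin\theta + (\pi-\theta)\bw^\top\bv\bigr),
\end{equation*}
where $\theta = \theta(\bw,\bv)$.

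Next I would compute $\nabla F(\bw)$ directly, using the identities $\nabla\theta = -(\bv - \cos\theta\,\bar{\bw})/(\|\bw\|\sin\theta)$ and $\nabla(\|\bw\|\sin\theta) = (\bar{\bw} - \cos\theta\,\bv)/\sin\theta$. The $1/\sin\theta$ singularities from these two pieces are expected to cancel (driven by $1-\cos^2\theta = \sin^2\theta$), and the gradient should collapse to the clean two-term expression
\begin{equation*}
\nabla F(\bw) = \Bigl(\tfrac{\|\bw\|}{2} - \tfrac{\sin\theta}{2\pi}\Bigr)\bar{\bw} - \tfrac{\pi-\theta}{2\pi}\bv.
\end{equation*}

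Finally, I would apply the same chain-rule identity derived at the start of the proof of \lemref{lem:angle decrease}, namely
\begin{equation*}
\tfrac{\partial}{\partial t}\theta(\bw(t),\bv) = \tfrac{(\bar{\bv} - \cos\theta\,\bar{\bw})^\top \nabla F(\bw(t))}{\|\bw(t)\|\,\sin\theta}.
\end{equation*}
The test vector $\bar{\bv} - \cos\theta\,\bar{\bw}$ is orthogonal to $\bar{\bw}$, so the $\bar{\bw}$-component of $\nabla F$ contributes nothing and only the $\bv$-piece survives. A single line of algebra then yields
\begin{equation*}
\tfrac{\partial}{\partial t}\theta(\bw(t),\bv) = -\tfrac{(\pi-\theta)\sin\theta}{2\pi\,\|\bw(t)\|},
\end{equation*}
which is manifestly non-positive for $\theta\in[0,\pi]$ (in fact strictly negative unless $\theta\in\{0,\pi\}$).

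The main obstacle is the gradient calculation itself: taken term by term, both $\nabla(\|\bw\|\sin\theta)$ and the chain-rule expansion of $(\pi-\theta)\bw^\top\bv$ introduce $1/\sin\theta$ factors that a priori suggest bad behavior near $\theta\in\{0,\pi\}$. Tracking the cancellation of these singular pieces carefully is the only substantive part of the proof; the rest is mechanical. The degenerate cases $\sin\theta = 0$ need no separate treatment, since the final formula for $\partial\theta/\partial t$ extends continuously to $0$ at those boundary values.
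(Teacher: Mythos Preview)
Your proposal is correct and follows essentially the same approach as the paper's proof: both use the closed-form Gaussian/ReLU gradient expression, apply the same chain-rule identity for $\partial\theta/\partial t$, note that the test vector $\bar{\bv}-\cos\theta\,\bar{\bw}$ is orthogonal to $\bar{\bw}$ so only the $\bv$-component of $\nabla F$ contributes, and conclude non-positivity. The only cosmetic differences are that the paper cites the gradient formula from \cite{brutzkus2017globally,safran2017spurious} rather than deriving it, and stops after showing the inner product is non-positive, whereas you carry the computation one step further to the explicit formula $\partial\theta/\partial t = -\tfrac{(\pi-\theta)\sin\theta}{2\pi\|\bw\|}$.
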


\begin{proof}
Similar to the proof of \lemref{lem:angle decrease}, it is enough to prove that
\begin{equation}\label{eq:toshownonpos}
\left(\bar{\bv}-(\bar{\bw}(t)^\top\bar{\bv})\bar{\bw}(t)\right)^\top
\nabla F(\bw(t))~\leq~ 0~,
\end{equation}
where we used that $\norm{\bw(t)} > 0$ hence the angle expression is differentiable. In the standard Gaussian case, $\nabla F(\bw(t))$ has a closed-form expression (see \cite{brutzkus2017globally}, \cite{safran2017spurious}), namely
\begin{equation}\label{eq:gradgauss}
\nabla F(\bw) = \frac{1}{2}\bw-\frac{1}{2\pi}\left(\norm{\bv}\sin(\theta(\bw,\bv))\bar{\bw}+(\pi-\theta(\bw,\bv)\bv)\right)~.
\end{equation}
Multiplying this by $\left(\bar{\bv}-(\bar{\bw}(t)^\top\bar{\bv})\bar{\bw}(t)\right)$, and noting that this vector is orthogonal to $\bw(t)$ (as it is simply the component of $\bar{\bv}$ orthogonal to $\bar{\bw}(t)$, we get that
\begin{align*}
\left(\bar{\bv}-(\bar{\bw}(t)^\top\bar{\bv})\bar{\bw}(t)\right)^\top
\nabla F(\bw(t))
~&=~ 
\left(\bar{\bv}-(\bar{\bw}(t)^\top\bar{\bv})\bar{\bw}(t)\right)^\top
\left(-\frac{\pi-\theta(\bw(t),\bv)}{2\pi}\bv\right)\\
&=~
-\frac{\pi-\theta(\bw(t),\bv)}{2\pi}\left(\bar{\bv}^\top \bv-(\bar{\bw}(t)^\top\bar{\bv})
(\bar{\bw}(t)^\top\bv)\right)\\
&=~
-\frac{\pi-\theta(\bw(t),\bv)}{2\pi}\left(\norm{\bv}-\norm{\bv}(\bar{\bw}(t)^\top\bar{\bv})^2
\right)\\
&=~-\frac{\pi-\theta(\bw(t),\bv)}{2\pi}\left(1-(\bar{\bw}(t)^\top\bar{\bv})^2
\right)\norm{\bv}~.
\end{align*}
Since $\theta(\bw(t),\bv)\in [-\pi,\pi]$ and $\bar{\bw}(t)^\top\bar{\bv}\in [-1,1]$, it follows that this expression is non-negative, establishing \eqref{eq:toshownonpos} and hence the lemma.

\end{proof}

\begin{lemma}\label{lem:normincrease}
    Let $\theta(\bw(t),\bv)= \pi - \alpha$ and assume that $\bw(t)\neq 0$. If $\norm{\bw(t)}\leq \frac{\norm{\bv}}{\pi^4}\alpha^3$, then $\frac{\partial}{\partial t}\norm{\bw(t)}^2\geq 0$
\end{lemma}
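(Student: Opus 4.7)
The plan is to combine the closed-form expression for $\nabla F(\bw)$ already exploited in the proof of \lemref{lem:angleincrease} with a single elementary one-variable inequality. Since $\bw(t)\neq 0$, the gradient formula used in that proof is valid, and $\frac{\partial}{\partial t}\norm{\bw(t)}^2 = 2\bw(t)^\top \dot{\bw}(t) = -2\bw(t)^\top\nabla F(\bw(t))$; it therefore suffices to show $\bw^\top\nabla F(\bw) \leq 0$ under the hypothesis of the lemma.

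Fix $\bw = \bw(t)$ and $\theta = \theta(\bw,\bv)$. Plugging the identity
\[
\nabla F(\bw) = \frac{1}{2}\bw - \frac{1}{2\pi}\bigl(\norm{\bv}\sin(\theta)\bar{\bw} + (\pi - \theta)\bv\bigr)
\]
into $\bw^\top\nabla F(\bw)$, and using $\bw^\top\bar{\bw} = \norm{\bw}$ together with $\bw^\top\bv = \norm{\bw}\norm{\bv}\cos\theta$, a direct computation gives
\[
\bw^\top\nabla F(\bw) ~=~ \frac{\norm{\bw}^2}{2} - \frac{\norm{\bw}\norm{\bv}}{2\pi}\bigl(\sin\theta + (\pi - \theta)\cos\theta\bigr).
\]
Substituting $\theta = \pi - \alpha$ with $\alpha\in (0,\pi]$ simplifies the bracketed term to $\sin\alpha - \alpha\cos\alpha$, so the desired inequality $\bw^\top\nabla F(\bw) \leq 0$ reduces to $\norm{\bw} \leq \frac{\norm{\bv}}{\pi}(\sin\alpha - \alpha\cos\alpha)$. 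In view of the hypothesis $\norm{\bw}\leq \frac{\norm{\bv}\alpha^3}{\pi^4}$, the whole lemma thus comes down to verifying the one-variable inequality
\[
\sin\alpha - \alpha\cos\alpha ~\geq~ \frac{\alpha^3}{\pi^3}\qquad \text{for all } \alpha\in [0,\pi].
\]

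For this last step I would note that $\frac{d}{d\alpha}(\sin\alpha - \alpha\cos\alpha) = \alpha\sin\alpha \geq 0$ on $[0,\pi]$, so $\sin\alpha - \alpha\cos\alpha = \int_0^\alpha s\sin s\,ds$. Applying the elementary bound $\sin s \geq \frac{4s(\pi-s)}{\pi^2}$ on $[0,\pi]$ yields $\sin\alpha - \alpha\cos\alpha \geq \frac{4\alpha^3}{3\pi} - \frac{\alpha^4}{\pi^2} = \frac{\alpha^3}{\pi}\bigl(\frac{4}{3} - \frac{\alpha}{\pi}\bigr)$, which for $\alpha \leq \pi$ is at least $\frac{\alpha^3}{3\pi}$, and this in turn exceeds $\frac{\alpha^3}{\pi^3}$ because $\pi^2 > 3$. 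There is no real obstacle in this argument: the only mild care is that the constant $\pi^4$ in the hypothesis be large enough to absorb the factor $\tfrac{1}{3}$ coming from the Taylor expansion $\sin\alpha - \alpha\cos\alpha \sim \tfrac{\alpha^3}{3}$ near the origin, which is precisely where the inequality is tightest; away from $\alpha=0$ the left-hand side is much larger than $\alpha^3/\pi^3$, so the bound has ample slack.
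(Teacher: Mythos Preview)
Your reduction is exactly the paper's: plug in the closed-form gradient, compute $\bw^\top\nabla F(\bw)$, substitute $\theta=\pi-\alpha$, and reduce everything to the scalar inequality
\[
\sin\alpha-\alpha\cos\alpha~\geq~\frac{\alpha^3}{\pi^3}\qquad(\alpha\in[0,\pi]).
\]
The only divergence is in how you justify this last inequality, and there your argument breaks. The ``elementary bound'' $\sin s\geq \frac{4s(\pi-s)}{\pi^2}$ on $[0,\pi]$ is false; the inequality goes the \emph{other} way. Both functions vanish at $0,\pi$ and equal $1$ at $\pi/2$, but the parabola has slope $4/\pi>1$ at the origin and second derivative $-8/\pi^2>-1$ at $\pi/2$, so it sits \emph{above} $\sin s$ throughout $(0,\pi)$ (e.g.\ at $s=\pi/4$ one has $\sin s\approx 0.707<0.75$). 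Consequently the chain $\int_0^\alpha s\sin s\,ds\geq \frac{4\alpha^3}{3\pi}-\frac{\alpha^4}{\pi^2}$ is not valid, and the rest of the step collapses.

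The paper handles the scalar inequality by a short two-case argument: for $\alpha\in[0,\pi/2]$ a Taylor expansion gives $\sin\alpha-\alpha\cos\alpha\geq \alpha^3\bigl(\tfrac12-\tfrac16-\tfrac{\alpha^2}{24}\bigr)\geq \alpha^3/5$, and for $\alpha\in[\pi/2,\pi]$ one uses that $\sin\alpha-\alpha\cos\alpha$ is increasing (its derivative is $\alpha\sin\alpha\geq 0$), hence $\geq 1\geq \alpha^3/\pi^3$. Either replace your sine lower bound with one that is actually true (e.g.\ Jordan's inequality $\sin s\geq \tfrac{2}{\pi}s$ on $[0,\pi/2]$, combined with a separate treatment of $[\pi/2,\pi]$), or simply adopt the paper's case split; your integral representation $\sin\alpha-\alpha\cos\alpha=\int_0^\alpha s\sin s\,ds$ is a clean starting point for either route.
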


\begin{proof}

Using the closed-form expression for $\nabla F(\bw)$ (see \eqref{eq:gradgauss}), we have
\begin{align*}
\frac{\partial}{\partial t}\norm{\bw(t)}^2~&=~
\bw(t)^\top\frac{\partial}{\partial t}\bw(t)~=~
-\bw(t)^\top\nabla F(\bw(t))\\
&=~-\frac{\norm{\bw(t)}^2}{2}+\frac{1}{2\pi}
\left(\norm{\bv}\norm{\bw(t)}\sin(\theta(\bw(t),\bv))+(\pi-\theta(\bw(t),\bv)\bw(t)^\top\bv)\right)\\
&=~\frac{\norm{\bw(t)}\norm{\bv}}{2\pi}\left(\sin(\theta(\bw(t),\bv))+(\pi-\theta(\bw(t),\bv))\bar{\bw}(t)^\top\bar{\bv}-\frac{\pi\norm{\bw(t)}}{\norm{\bv}}\right)\\
&=~\frac{\norm{\bw(t)}\norm{\bv}}{2}\left(\sin(\theta(\bw(t),\bv))+(\pi-\theta(\bw(t),\bv))\cos(\theta(\bw(t),\bv))-\frac{\pi\norm{\bw(t)}}{\norm{\bv}}\right)\\
\end{align*}
The expression $\sin(\theta)+(\pi-\theta)\cos(\theta)$ can be easily verified to be strictly monotonically decreasing in $\theta\in (0,\pi)$, and equal $0$ at $\theta=\pi$. Therefore, if $\theta\leq \pi-\alpha$, then the expression above can be lower bounded by
\begin{equation}
\frac{\norm{\bw(t)}\norm{\bv}}{2}\left(\sin(\pi-\alpha)+\alpha\cos(\pi-\alpha)-\frac{\pi\norm{\bw(t)}}{\norm{\bv}}\right)~=~
\frac{\norm{\bw(t)}\norm{\bv}}{2}\left(\sin(\alpha)-\alpha\cos(\alpha)-\frac{\pi\norm{\bw(t)}}{\norm{\bv}}\right)~.\label{eq:normlowbound}
\end{equation}
To slightly simplify this expression, we will now argue that 
\begin{equation}\label{eq:alpha}
\sin(\alpha)-\alpha\cos(\alpha)\geq \left(\frac{\alpha}{\pi}\right)^3~~~\forall \alpha \in [0,\pi]~.
\end{equation}
Assuming this inequality holds, we get that \eqref{eq:normlowbound} is at least
\[
\frac{\norm{\bw(t)}\norm{\bv}}{2}\left(\left(\frac{\alpha}{\pi}\right)^3-\frac{\pi\norm{\bw(t)}}{\norm{\bv}}\right)~,
\]
which is non-negative as long as $\norm{\bw(t)}\leq \norm{\bv}\alpha^3/\pi^4$, proving the lemma. It only remains to establish \eqref{eq:alpha}. We consider two cases:
\begin{itemize}
	\item If $\alpha\in [0,\pi/2]$, then by a Taylor expansion of $\sin(\alpha),\cos(\alpha)$ around $0$, we have
	\[
	\sin(\alpha)-\alpha\cos(\alpha)\geq \alpha-\frac{\alpha^3}{3!}-\alpha\left(1-\frac{\alpha^2}{2!}+\frac{\alpha^4}{4!}\right)~=~\alpha^3\left(\frac{1}{2!}-\frac{1}{3!}-\frac{\alpha^2}{4!}\right)
	~\geq~\alpha^3\left(\frac{1}{2!}-\frac{1}{3!}-\frac{(\pi/2)^2}{4!}\right)
	\]
	which is at least $\alpha^3/5$.
	\item If $\alpha \in \left[\frac{\pi}{2},\pi\right]$, it is easily verified via differentiation that $\sin(\alpha)-\alpha\cos(\alpha)\geq \sin(\alpha)$ is monotonically increasing in $\alpha$. Therefore, it can be lower bounded by $\sin(\pi/2)-(\pi/2)\cos(\pi/2)=1\geq \alpha^3/\pi^3$.
\end{itemize}
Combining the two cases, \eqref{eq:alpha} follows.
\end{proof}

\end{document}